\documentclass[sigconf,screen]{acmart}
\usepackage{url}
\usepackage{amsthm}
\usepackage{afterpage}
\usepackage{subcaption}
\usepackage[normalem]{ulem}
\useunder{\uline}{\ul}{}
\usepackage{graphicx}
\usepackage{textcomp}
\usepackage{xcolor}
\usepackage{balance} 
\usepackage{bbm}
\usepackage{amsmath,amsfonts}
\usepackage{booktabs}
\usepackage{tcolorbox}

\usepackage{epstopdf}
\usepackage{array}
\usepackage{booktabs}
\usepackage{multicol}
\usepackage{color}
\usepackage{xcolor}
\usepackage{colortbl}
\usepackage{xspace}
\usepackage{multirow}
\usepackage{pifont}
\usepackage{enumitem}
\usepackage{bbding}
\usepackage{hyperref}
\usepackage{makecell}
\usepackage{microtype}
\usepackage{fontawesome}
\usepackage{caption,setspace}
\usepackage[utf8]{inputenc}
\usepackage{utfsym}
\usepackage{diagbox}
\usepackage{makecell}
\newcommand{\thickhline}{\noalign{\hrule height 0.3pt}} 

\DeclareMathOperator*{\argmin}{arg\,min}

\AtBeginDocument{%
 }
\acmSubmissionID{7509}

\ccsdesc[500]{Computing methodologies~Semi-supervised learning settings}

\keywords{Multimodal-Attributed Graph; Graph Neural Network; Graph Machine Learning}

\usepackage[linesnumbered,ruled,vlined]{algorithm2e}

\newcommand{\ssymbol}[1]{^{\@fnsymbol{#1}}}

\SetKwInput{KwInput}{Input}                
\SetKwInput{KwOutput}{Output}              
\SetKwRepeat{Do}{do}{while}

\definecolor{darkred}{rgb}{0.55, 0.0, 0.0}
\definecolor{lightred}{rgb}{0.94, 0.5, 0.5}
\definecolor{rosybrown}{rgb}{0.74, 0.56, 0.56}
\definecolor{darkgreen}{rgb}{0.0, 0.39, 0.0}
\definecolor{skyblue}{rgb}{0.56, 0.93, 0.56}
\definecolor{royalblue}{rgb}{0.0, 0.0, 0.55}
\definecolor{lightblue}{rgb}{0.68, 0.85, 0.9}
\definecolor{skyblue}{rgb}{0.53, 0.81, 0.92}

\definecolor{gray}{rgb}{0.5, 0.5, 0.5}
\definecolor{forestgreen}{rgb}{0.13, 0.55, 0.13}
\definecolor{slateblue}{rgb}{0.42, 0.35, 0.80}
\definecolor{darkgray}{rgb}{0.33, 0.33, 0.33}
\definecolor{royalblue}{rgb}{0.25, 0.41, 0.88}

\SetCommentSty{mycommfont}

\usepackage{ulem}

\begin{document}

\title{TMTE: Effective Multimodal Graph Learning with\\ Task-aware Modality and Topology Co-evolution}

\author{Yinlin Zhu}
\affiliation{%
\institution{Sun Yat-sen University}
\city{Guangzhou}
\country{China}}
\email{zhuylin27@mail2.sysu.edu.cn}

\author{Xunkai Li}
\affiliation{%
\institution{Beijing Institute of Technology}
\city{Beijing}
\country{China}}
\email{cs.xunkai.li@gmail.com}

\author{Di Wu}
\authornote{Corresponding author}
\affiliation{
  \institution{Sun Yat-sen University}
  \city{Guangzhou}
  \country{China}
}
\email{wudi27@mail.sysu.edu.cn}

\author{Wang Luo}
\affiliation{%
\institution{Sun Yat-sen University}
\city{Guangzhou}
\country{China}}
\email{luow69@mail2.sysu.edu.cn}

\author{Miao Hu}
\affiliation{%
\institution{Sun Yat-sen University}
\city{Guangzhou}
\country{China}}
\email{humiao5@mail.sysu.edu.cn}

\author{Guocong Quan}
\affiliation{%
\institution{Sun Yat-sen University}
\city{Guangzhou}
\country{China}}
\email{quangc@mail.sysu.edu.cn}

\renewcommand{\shortauthors}{Yinlin Zhu, Xunkai Li, Di Wu, Wang Luo, Miao Hu, Di Wu}

\begin{abstract}
Multimodal-attributed graphs (MAGs) are a fundamental data structure for multimodal graph learning (MGL), enabling both graph-centric and modality-centric tasks. However, our empirical analysis reveals inherent topology quality limitations in real-world MAGs, including noisy interactions, missing connections, and task-agnostic relational structures. A single graph derived from generic relationships is therefore unlikely to be universally optimal for diverse downstream tasks. To address this challenge, we propose \underline{\textbf{T}}ask-aware \underline{\textbf{M}}odality and \underline{\textbf{T}}opology co-\underline{\textbf{E}}volution (TMTE), a novel MGL framework that jointly and iteratively optimizes graph topology and multimodal representations toward the target task. TMTE is motivated by the bidirectional coupling between modality and topology: multimodal attributes induce relational structures, while graph topology shapes modality representations. Concretely, TMTE casts topology evolution as multi-perspective metric learning over modality embeddings with an anchor-based approximation, and formulates modality evolution as smoothness-regularized fusion with cross-modal alignment, yielding a closed-loop task-aware co-evolution process. Extensive experiments on 9 MAG datasets and 1 non-graph multimodal dataset across 6 graph-centric and modality-centric tasks show that TMTE consistently achieves state-of-the-art performance. Our code is available at \href{https://anonymous.4open.science/r/TMTE-1873}{this repository}.
\end{abstract}

 \maketitle

\section{Introduction}
\label{sec: introduction}

    Multimodal-attributed graphs (MAGs) represent entities as nodes with multimodal attributes (e.g., images and texts) and encode their relational dependencies as edges, which have been widely adopted in various real-world applications, such as finance~\cite{mag_app_finance}, biochemistry~\cite{mag_app_bio}, and recommendation systems~\cite{lgmrec}. Building upon this expressive data structure, multimodal graph learning (MGL) has attracted growing attention in recent years. On the one hand, MGL enhances graph-centric tasks (e.g., node classification~\cite{unigraph2} and link prediction~\cite{lion}) by leveraging rich multimodal attributes; on the other hand, it benefits modality-centric tasks (e.g., graph-to-text~\cite{graphgpt_o} and graph-to-image~\cite{instructg2i} generation) by exploiting graph topology for contextual information~\cite{mai2023multimodal}.

    Despite their notable advances, most existing MGL methods rely on a \textit{perfect topology assumption}, where the real-world graphs are presumed to be sufficiently well-structured for downstream tasks~\cite{idgl}. However, this assumption often does not hold in practice, which can be summarized from three perspectives: (1) \textbf{Noisy Interactions.} Due to inherent modality noise in the MAGs or imperfections in the collection process, the graph topology may contain spurious connections~\cite{dgf}. (2) \textbf{Missing Interactions.} The collected MAGs may overlook potentially informative connections. Moreover, in some practical applications with sequential entities such as audiovisual speech recognition~\cite{mm_temporal_seq} or isolated multimodal entities like image-text matching~\cite{clip}, no explicit graph priors are available to characterize the inter-entity relationships. (3) \textbf{Task-agnostic Interactions.} Both intuitive reasoning and our empirical study (Sec.~\ref{sec: empirical investigation}) suggest that a single graph structure derived from task-agnostic relationships (e.g., co-occurrence patterns) is unlikely to be universally optimal for diverse downstream tasks, since different tasks often benefit from distinct contextual signals. For example, classification tasks emphasize semantic category consistency, while generation tasks may require semantic diversity. Notably, existing methods~\cite{dgf} prune spurious links based on low cross-modal similarity, thereby targeting only noisy interactions while overlooking the other challenges. Consequently, existing MGL methods inevitably depend on multimodal contexts induced by suboptimal topologies, which compromises downstream task performance.

    \textit{How can we adaptively learn graph structures that faithfully capture inter-entity relationships while simultaneously providing task-specific contextual cues for diverse downstream tasks, thereby enabling effective MGL?}
    In this work, we address this question by proposing a novel MGL framework, termed \underline{\textbf{T}}ask-aware \underline{\textbf{M}}odality and \underline{\textbf{T}}opology co-\underline{\textbf{E}}volution (TMTE). TMTE jointly and iteratively optimizes graph topology and multimodal representations toward the target task.
    Our key insight lies in the intrinsic coupling between modality and topology in MGL:
    (1) \textit{\textbf{Modality $\rightarrow$ Topology.}} Multimodal descriptions of entities naturally induce relational structures; for instance, user attributes shape social connections, and molecular properties govern bonding patterns.
    (2) \textit{\textbf{Topology $\rightarrow$ Modality.}} Conversely, graph topology influences modality representations via message passing or cross-modal alignment, thereby shaping the latent node embeddings used for downstream tasks.
    Concretely, TMTE casts topology evolution as multi-perspective metric learning over modality embeddings with an anchor-based approximation, and formulates modality evolution as smoothness-regularized fusion with cross-modal alignment. The refined topology provides improved contextual signals tailored to each node with respect to the downstream objective and induces enhanced modality representations through message passing or graph contrastive alignment. This procedure iterates across training epochs and forms a task-aware modality-topology co-evolution process, converging when the learned structure sufficiently approximates a topology optimized for the target task.

    \textbf{Our Contributions:} 
    (1) \textbf{In-depth Investigation.} We provide an in-depth empirical analysis to demonstrate the inherent topology quality limitations of MAGs, as different downstream tasks often require distinct graph topologies to provide appropriate contextual signals.
    (2) \textbf{Novel Method.} We propose TMTE, a novel MGL framework that jointly and evolutionarily optimizes the graph topology and modality attributes toward the downstream task.
    (3) \textbf{State-of-the-art Performance.} Extensive experimental results on 9 MAG datasets and 1 non-graph multimodal dataset demonstrate that TMTE consistently outperforms the state-of-the-art baselines on 3 graph-centric and 3 modality-centric downstream tasks.

\section{Problem Formulation}

A MAG is denoted as $\mathcal{G} = (\mathcal{V}, \mathcal{E}, \{\mathbf{X}^{(m)}\}_{m \in \mathcal{M}})$, 
where $\mathcal{V}$ is the set of nodes with $N = |\mathcal{V}|$, $\mathcal{E}$ is the set of edges, and $\mathcal{M}$ indexes the available modalities (e.g., text and images). 
For each node $v_i$ under modality $m$, we associate a modality-specific feature vector $\mathbf{x}_i^{(m)} \in \mathbb{R}^{d_m}$, obtained by encoding the raw modality data with a pre-trained modality encoder (e.g., Sentence-BERT~\cite{sentence_bert} and T5~\cite{t5} for text; ViT~\cite{vit} and DINOv2~\cite{dinov2} for images). 
For each modality $m$, the embeddings of all nodes are organized into a feature matrix $\mathbf{X}^{(m)} \in \mathbb{R}^{N \times d_m}$. 
All modalities share a common relational structure represented by the adjacency matrix $\mathbf{A} \in \mathbb{R}^{N \times N}$, with the corresponding degree matrix $\mathbf{D}$, where $\mathbf{D}_{ii} = \sum_j \mathbf{A}_{ij}$ and $\mathbf{D}_{ij} = 0$ for $i \neq j$. To capture structural smoothness, the symmetrically normalized adjacency matrix and Laplacian of $\mathcal{G}$ are represented by $\tilde{\mathbf{A}}=\mathbf{D}^{-1/2} \mathbf{A} \mathbf{D}^{-1/2}$ and $\mathbf{L} = \mathbf{I} - \tilde{\mathbf{A}}$, respectively. 

In this paper, we consider three \textbf{Graph-centric Tasks}, including (1) \textit{Node Classification}: assigning category labels to unlabeled nodes;
(2) \textit{Link Prediction}: estimating whether an edge $(u,v)$ should exist in $\mathcal{E}$; (3) \textit{Node Clustering}: grouping nodes into clusters by applying standard clustering algorithms to the learned node embeddings; and three \textbf{Modality-centric Tasks}, including (1) \textit{Modality Retrieval}: given a query from one modality (e.g., text), retrieving its corresponding representation in another modality;
(2) \textit{Graph-to-Text Generation (G2Text)}: producing textual outputs conditioned on a target node along with task instructions and its graph neighborhood;
(3) \textit{Graph-to-Image Generation (G2Image)}: synthesizing images using diffusion models conditioned on the node’s textual descriptions and the associated graph context. Further discussion of these tasks is presented in Appendix~\ref{appendix: more experimental setups}.

\section{Empirical Investigation}
\label{sec: empirical investigation}

In this section, we conduct a comprehensive analysis of the topology quality limitations of MAGs (as discussed in Sec.~\ref{sec: introduction}). Our goal is to systematically examine the following research questions, including \textbf{Q1}: Are the original MAGs optimal for downstream tasks? \textbf{Q2}: Do different downstream tasks require distinct contextual multimodal semantics induced by different graph topologies? 
and \textbf{Q3}: Can TMTE learn task-adaptive graph topologies that better align with downstream objectives?

\vspace{+0.1cm}
\noindent \textbf{Datasets, Methods, and Tasks.} The controlled study is conducted to compare the proposed TMTE and MM-GCN~\cite{mmgcn} on the Toys~\cite{ni2019justifying} dataset. We consider two representative downstream tasks, including a graph-centric task (node classification) and a modality-centric task (G2Image). For each task, we evaluate three different topology configurations. All other experimental settings are kept consistent across methods. 
Detailed dataset, baselines, and pipelines of both the node classification and G2Image tasks are presented in Appendices~\ref{appendix: dataset details},~\ref{appendix: baseline details} and~\ref{appendix: more experimental setups}, respectively.

\begin{figure}[htb]
    \centering
    \includegraphics[width=0.49\textwidth]{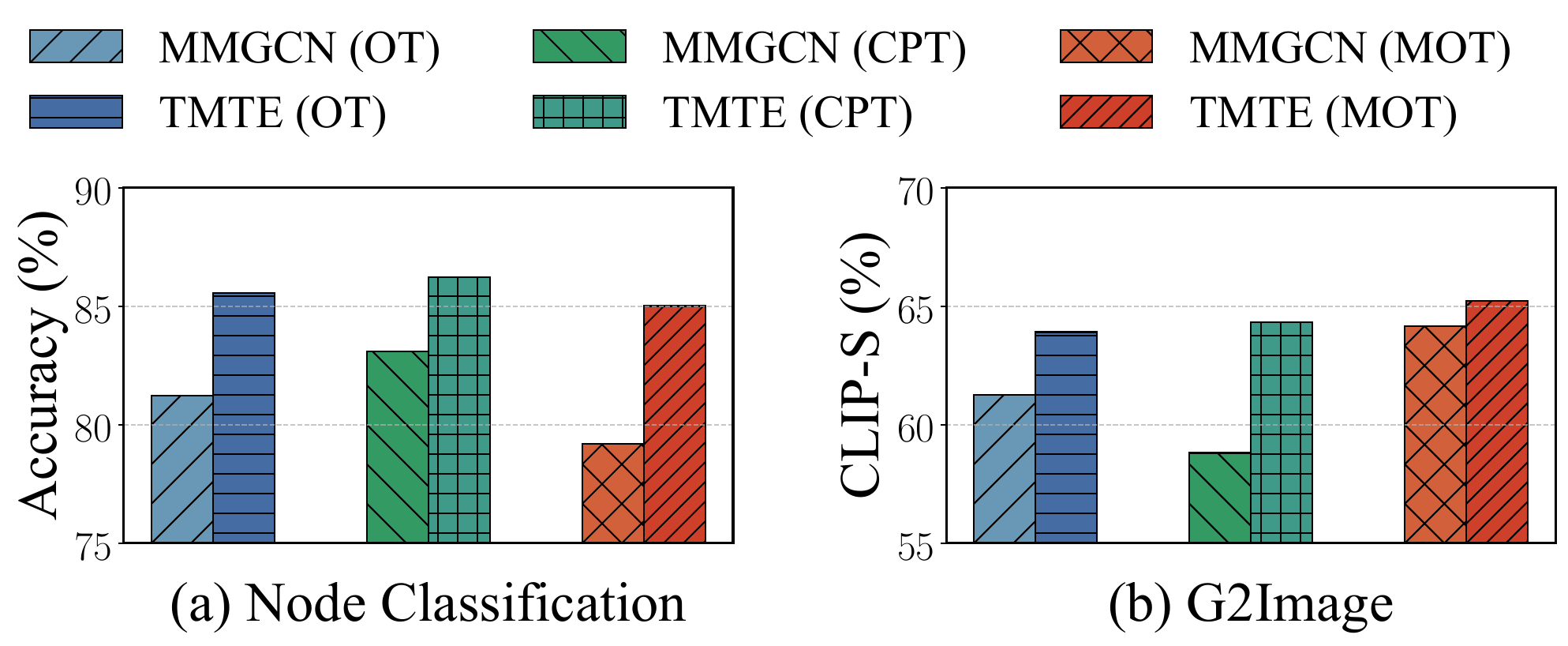}
    \caption{\textbf{Experimental results of our empirical study}. We compare the proposed TMTE and MM-GCN (representative MGL baseline) on the Toys dataset. (a) Node classification performance under three topology settings. (b) G2Image performance under three topology settings. All results are presented as percentages.}
    \label{fig: empirical study}
\end{figure}

\vspace{+0.1cm}
\noindent \textbf{Topology Variants.} We construct the following three graph structures, thereby providing different local context semantics for each node: (1) \textit{Original Topology.} The raw graph structure provided by the Toys dataset. (2) \textit{Cross-category Pruned Topology.} The original graph with cross-category edges removed. This operation aims to suppress potentially noisy context arising from connections between semantically inconsistent categories, thereby improving performance on node classification tasks.
(3) \textit{Modality-Optimized Topology.} The original graph is refined according to image-text cross-modal similarity. More details are presented in Appendix~\ref{appendix: more experimental setups}.

\vspace{+0.1cm}
\noindent \textbf{The original topology is not optimal (Answer for Q1).}
We first observe that the original graph structure does not yield the best performance for either task. For the \textit{node classification} (Fig.~\ref{fig: empirical study} (a)), the cross-category pruned topology (\textbf{i.e., CPT}) consistently outperforms the original topology (\textbf{i.e., OT}). This suggests that the original graph contains noisy inter-category connections that dilute discriminative category-specific contexts required for classification. Similarly, for the \textit{G2Image} task (Fig.~\ref{fig: empirical study} (b)), the modality-optimized topology (\textbf{i.e., MOT}) surpasses the original topology, indicating that the original graph fails to provide sufficiently aligned multimodal contexts for high-quality image generation.

\vspace{+0.1cm}
\noindent  \textbf{Different tasks prefer different topologies (Answer for Q2).}
More importantly, the optimal topology varies across tasks. For the \textit{node classification} task, the cross-category pruned topology achieves higher performance than the modality-optimized topology for the MMGCN and TMTE methods. This implies that classification primarily benefits from cleaner, label-consistent structural neighborhoods. In contrast, for the \textit{G2Image} task, the modality-optimized topology significantly outperforms the cross-category pruned topology. This demonstrates that modality-centric generation tasks rely more heavily on fine-grained cross-modal semantic alignment than strict category homophily. These results indicate that a single task-agnostic topology is unlikely to be universally optimal, as different downstream objectives require distinct topological priors to provide appropriate multimodal contextual signals. Therefore, naively conducting MGL on these topologies yields sub-optimal performance.

\vspace{+0.1cm}
\noindent  \textbf{TMTE can learn task-adaptive topologies (Answer for Q3).}
Finally, we analyze the performance of TMTE across different topology configurations. 
Figure~\ref{fig: empirical study} reveals the following two trends:
(1) Under all three topology settings, TMTE consistently outperforms MM-GCN on both node classification and G2Image tasks. 
(2) The performance variation of TMTE across different input topologies is relatively smaller compared with MM-GCN. These results indicate that TMTE is less sensitive to the quality and configuration of the input graph structure, while MM-GCN exhibits more noticeable performance fluctuations when the topology changes. 
This improved robustness indicates that the task-aware topology and modality co-evolution mechanism can optimize the structural dependencies during training rather than relying solely on a fixed input graph.

\section{Methodology}
\label{sec: methodology}

In this section, we introduce TMTE, a novel MGL framework that jointly and evolutionarily optimizes both graph topology and modality representations toward downstream tasks. We first present an overview of TMTE in Fig.~\ref{fig: framework}. We then elaborate on the key components in detail. Specifically, we describe the topology evolution process in Sec.~\ref{sec: topology evolution}, the modality evolution mechanism in Sec.~\ref{sec: modality evolution}, and finally, the task-aware co-evolution strategy that integrates both processes in Sec.~\ref{sec: task co-evo}.

\subsection{Topology Evolution from Original Modality Feature Space}
\label{sec: topology evolution}

As discussed in Sec.~\ref{sec: introduction}, the topology of the original MAG may contain noisy connections while omitting informative ones. To address this issue, TMTE first encourages the topology to preserve intrinsic inter-entity relationships within the original modality feature space, serving as the starting point for topology evolution.

\vspace{+0.1cm}
\noindent \textbf{Multimodal and Multi-perspective Similarity Metric Learning.} We formulate topology evolution as a similarity metric learning problem driven by multimodal semantics from multiple perspectives. Specifically, given a MAG $\mathcal{G} = (\mathcal{V}, \mathcal{E}, {\mathbf{X}^{(m)}}_{m \in \mathcal{M}})$, we first compute the fused representation as $\bar{\mathbf{X}} = \frac{1}{|\mathcal{M}|}\sum_{m\in\mathcal{M}}\mathbf{X}^{(m)}$ and extend the modality index set as $\bar{\mathcal{M}} = \mathcal{M} \cup \{|\mathcal{M}|+1\}$. For each $m \in \bar{\mathcal{M}}$, we introduce $K$ learnable parameter vectors $\big[\mathbf{w}^{(m,1)}, \ldots, \mathbf{w}^{(m,K)}\big]$ to capture pairwise similarities from distinct perspectives using weighted cosine similarity. The similarity between nodes $v_i$ and $v_j$ under modality $m$ is computed by aggregating these perspectives:
\begin{equation}
\label{eq: sim channel}
a_{ij}^{(m,p)} =
\mathrm{cos}\!\left(
\mathbf{w}^{(m,p)} \odot \mathbf{x}_i^{(m)},
\mathbf{w}^{(m,p)} \odot \mathbf{x}_j^{(m)}
\right),
\qquad
s_{ij}^{(m)} = \frac{1}{K}\sum_{p=1}^K a_{ij}^{(m,p)},
\end{equation}
where $\odot$ denotes the Hadamard product. Finally, the overall pairwise similarity between nodes $v_i$ and $v_j$ is obtained via a convex combination of $|\mathcal{M}|+1$ modality-specific similarities: 
\begin{equation}
\label{eq: sim combine}
\mathbf{A}^{E_1}_{ij} = \sum_{m \in \bar{\mathcal{M}}} \mathrm{softmax}(\mathbf{\beta})_m \ s_{ij}^{(m)},
\end{equation}
where $\mathbf{\beta}\in\mathbb{R}^{|\mathcal{M}|+1}$ is jointly optimized with the model parameters to adaptively balance these modality-specific semantics. We define $\mathbf{A}^{E_1}$ as the starting point for topology evolution. Since $\mathbf{A}^{E_1}_{ij} \in [-1,1]$, we further apply a non-negative threshold $\epsilon$ by setting elements below $\epsilon$ to zero, forming a symmetric, sparse, and non-negative weighted adjacency matrix $\mathbf{A}^{E_1}\in \mathbb{R}^{|\mathcal{V}|\times|\mathcal{V}|}$.

\vspace{+0.1cm}
\noindent  \textbf{Node-anchor Affinity Matrix for Scalability.} However, computing full pairwise similarities would require $\mathcal{O}(|\bar{\mathcal{M}}|\cdot |\mathcal{V}|^2)$ time and memory and is prohibitive for large MAGs. Inspired by studies in large-scale graph learning~\cite{large_scale_graph_learning}, we adopt an anchor-based scalable formulation. Specifically, let $\mathcal{U} \subseteq \mathcal{V}$ denote an anchor set with $|\mathcal{U}|$ randomly sampled nodes. We compute a node-anchor affinity matrix $\mathbf{R}^{E_1} \in \mathbb{R}^{|\mathcal{V}| \times |\mathcal{U}|}$ using the same weighted cosine scheme as in Eqs.~\eqref{eq: sim channel} and~\eqref{eq: sim combine}. This preserves $\mathcal{O}(|\bar{\mathcal{M}}| \cdot|\mathcal{V}| \cdot |\mathcal{U}|)$ complexity, where $|\mathcal{U}| \ll |\mathcal{V}|$ holds. $\mathbf{R}^{E_1}$ serves as the starting point of topology evolution across a bipartite graph $\mathcal{B}$ between $\mathcal{V}$ and $\mathcal{U}$, which can further induce the global topology evolution term $\mathbf{A}^{E_1}\in \mathbb{R}^{|\mathcal{V}| \times |\mathcal{V}|}$ that refines pairwise relations $\mathcal{V}\times \mathcal{V}$, formulated as:
\begin{equation}
    \label{eq: recover}
    \mathbf{A}^{E_1}={\mathbf{\Delta}^{E_1}}^{-1} \mathbf{R}^{E_1} {\mathbf{\Lambda}^{E_1}}^{-1} {\mathbf{R}^{E_1}}^\top, 
\end{equation}
where $\mathbf{\Lambda}^{E_1} \in \mathbb{R}^{|\mathcal{U}| \times |\mathcal{U}|}$ ($\mathbf{\Lambda}^{E_1}_{jj}=\sum_{i=1}^{|\mathcal{N}|}{\mathbf{R}^{E_1}_{ij}}$) and $\mathbf{\Delta}^{E_1} \in \mathbb{R}^{|\mathcal{V}| \times |\mathcal{V}|}$ ($\mathbf{\Delta}^{E_1}_{ii}=\sum_{j=1}^{|\mathcal{U}|}{\mathbf{R}^{E_1}_{ij}}$) are both diagonal matrices. Importantly, Eq.~\eqref{eq: recover} does not need to be computed explicitly. Instead, the contextual semantics are diffused over the evolved topology through a two-stage smoothing operation induced by $\mathcal{B}$, which is detailed in Sec.~\ref{sec: modality evolution}.

To employ structural priors encoded in the initial topology, the learned pairwise relations $\mathbf{A}^{E_1}$ serve as residual structural refinements, correcting rather than replacing the original graph. For intuition, the topology evolution essentially leads to the symmetrically normalized adjacency matrix $\mathbf{Q}^{E_1}$:
\begin{equation}
\label{eq: combine_adj_norm_t}
\begin{aligned}
\mathbf{Q}^{E_1} = \lambda \tilde{\mathbf{A}} + (1 - \lambda) \mathbf{A}^{E_1},
\end{aligned}
\end{equation}
where $\tilde{\mathbf{A}}$ denotes the symmetrically normalized adjacency matrix of the original topology, and $\lambda \in (0,1)$ is a trade-off parameter to balance the original topology and evolved ones.

\begin{figure*}[htb]
 \centering
  \includegraphics[width=0.95\textwidth]{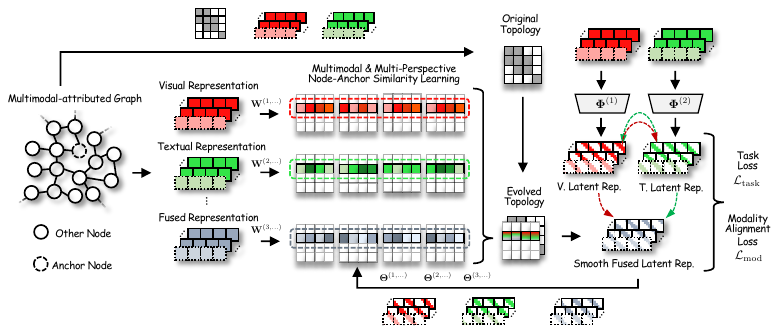}
  \caption{Overview of TMTE, which jointly and evolutionarily optimizes the topology and modality toward the downstream task.}
\label{fig: framework}
\end{figure*}

\subsection{Modality Evolution from Evolved Topology}
\label{sec: modality evolution}

After the topology evolution, we further perform modality evolution on the refined structure. Motivated by prior work in graph signal processing (GSP)~\cite{gsp_lap_smooth}, a well-learned node representation should be smooth over the graph topology. Accordingly, we model modality evolution as a two-step process: (1) learning node representations that capture all modality information while being smooth with respect to the graph, and (2) aligning each modality with the learned node embeddings.

\vspace{+0.1cm}
\noindent \textbf{Learning Smooth Fused Representations.} Formally, for the $m$-th modality, we project its raw feature space $\mathbf{X}^{(m)}$ into a latent space $\mathbf{H}^{(m)}$ via a linear transformation $\mathbf{\Phi}^{(m)}$:
\begin{equation}
    \label{eq: modality embedding}
    \mathbf{H}^{(m)} = \mathbf{X}^{(m)} \mathbf{\Phi}^{(m)}.
\end{equation}
The initial fused node representation is then obtained by averaging across modalities, i.e., $\bar{\mathbf{H}}=\frac{1}{|\mathcal{M}|}\sum_{m\in\mathcal{M}}\mathbf{H}^{(m)}$.
As our goal is to enforce representation smoothness over the evolved topology $\mathbf{Q}^{E_1}$, we further minimize the following optimization objective $f(\hat{\mathbf{H}})$:
\begin{equation}
    \label{eq: smooth}
        \min_{\hat{\mathbf{H}}\in\mathbb{R}^{|V|\times d}}f(\hat{\mathbf{H}})=||\hat{\mathbf{H}} - \bar{\mathbf{H}}||_F^2 + \alpha \cdot \text{tr}(\hat{\mathbf{H}}^\top(\mathbf{I}-\mathbf{Q}^{E_1})\hat{\mathbf{H}}),
\end{equation}
where $\alpha>0$ controls the smoothness strength, and the Laplacian regularizer $\mathbf{I}-\mathbf{Q}^{E_1}$ encourages neighboring nodes to have similar embeddings. The fidelity term $||\hat{\mathbf{H}} - \bar{\mathbf{H}}||_F^2$ ensures the learned embeddings $\hat{\mathbf{H}}$ remain close to the initial fused representation $\bar{\mathbf{H}}$.

By setting $\frac{\partial f(\hat{\mathbf{H}})}{\partial \hat{\mathbf{H}}}=0$ and further obtaining a $T$-step truncation closed-form solution, we yield the power series approximation of $\hat{\mathbf{H}}$, whose full derivation is provided in Theorem~\ref{thm: smooth fused rep}:
\begin{equation}
    \label{eq: approximated}
    \hat{\mathbf{H}}
    =
    \frac{1}{\alpha+1}
    \sum_{t=0}^{T}
    \left(
    \frac{\alpha\lambda}{\alpha+1}\tilde{\mathbf{A}}
    +
    \frac{\alpha(1-\lambda)}{\alpha+1}\mathbf{A}^{E_1}
    \right)^t
    \bar{\mathbf{H}}.
\end{equation}
Importantly, TMTE does not require explicitly constructing $\mathbf{A}^{E_1}$. 
Instead, Eq.~\eqref{eq: approximated} can be efficiently computed through recursive multiplications. For example, the multiplication $\mathbf{A}^{E_1}\bar{\mathbf{H}}$ can be decomposed as a two-stage smoothing operator (node-to-anchor-to-node transformation):
\begin{equation}
    \mathbf{Z}_{\mathcal{U}}
    =
    {\mathbf{\Lambda}^{E_1}}^{-1}
    {\mathbf{R}^{E_1}}^\top
    \bar{\mathbf{H}},
    \quad
    \mathbf{Z}_{\mathcal{V}}
    =
    {\mathbf{\Delta}^{E_1}}^{-1}
    \mathbf{R}^{E_1}
    \mathbf{Z}_{\mathcal{U}},
\end{equation}
where $\mathbf{Z}_{\mathcal{V}} = \mathbf{A}^{E_1}\bar{\mathbf{H}}$. Proofs and pseudocode refer to Theorem~\ref{thm: recursive power expansion} and Appendix~\ref{appendix: pseudo code}, respectively.

\vspace{+0.1cm}
\noindent  \textbf{Modality Alignment.} 
To induce modality evolution while maintaining cross-modality consistency, 
we align each modality embedding $\mathbf{H}^{(m)}$ with other modalities 
and the fused representation $\hat{\mathbf{H}}$ via a unified contrastive loss, which can be formulated as follows:
\begin{equation} \mathcal{L}_{\text{mod}} = - \sum_{m=1}^{|\mathcal{M}|} \sum_{n=1, n\neq m}^{|\mathcal{M}|+1} \sum_{u} \log \frac{ \exp\!\big(\mathrm{cos}(\mathbf{h}^{(m)}_u, \operatorname{csg}(\mathbf{h}^{(n)}_u))/\tau\big) }{ \sum_{v} \exp\!\big(\mathrm{cos}(\mathbf{h}^{(m)}_u, \operatorname{csg}(\mathbf{h}^{(n)}_v))/\tau\big) }, \end{equation}
where $\mathbf{h}^{(|\mathcal{M}|+1)}_u$ denotes the fused node embedding $\hat{\mathbf{h}}_u$, 
$\operatorname{csg}(\cdot)$ is the conditioned stop-gradient (i.e., applied only to the smooth fused representation), 
and $\tau$ is the temperature parameter.

\subsection{Task-aware Modality and Topology Co-evolution}
\label{sec: task co-evo}

\textbf{Optimization toward Downstream Tasks.}
The framework is task-agnostic and can be optimized for arbitrary downstream objectives with corresponding task-specific heads.
For \textit{graph-centric tasks} (e.g., node classification, link prediction, node clustering), we use the smooth fused representation $\hat{\mathbf{H}}$ in Eq.~\eqref{eq: approximated} as the topology-aware node embedding.
For \textit{modality-centric tasks} (e.g., modality retrieval, G2Text, G2Image), we adopt the modality-specific latent embeddings $\mathbf{H}^{(m)}$ in Eq.~\eqref{eq: modality embedding}, which preserve modality characteristics while being aligned via $\mathcal{L}_{\text{mod}}$.

We denote the unified downstream objective as $\mathcal{L}_{\text{task}}$, which is instantiated according to the specific task.
The overall optimization objective of TMTE is formulated as follows:
\begin{equation}
\label{eq: loss_tmte}
\mathcal{L}=
\mathcal{L}_{\text{mod}} + \eta\, \mathcal{L}_{\text{task}},
\end{equation}
where $\eta$ balances modality alignment and task objective (More details are presented in Appendix~\ref{appendix: more experimental setups}).

\vspace{+0.1cm}
\noindent \textbf{Modality and Topology Co-evolution.} Topology and modality evolution in TMTE form a closed-loop co-evolution process.
The evolved topology $\mathbf{Q}^{E_t}$ guides modality evolution via the smoothness constraints in Eqs.~\eqref{eq: smooth} and \eqref{eq: approximated}, while the updated representations $\hat{\mathbf{H}}$ and $\mathbf{H}^{(m)}$ provide refined similarity signals for topology reconstruction. Thus, representation learning and topology refinement mutually enhance each other.

Specifically, at the $t$-th round, we update topology using the smooth fused embedding in Eq.~\eqref{eq: approximated} and modality embeddings in Eq.~\eqref{eq: modality embedding}. Similar to Eqs.~\eqref{eq: sim channel} and \eqref{eq: sim combine}, multi-channel similarities are aggregated via $[\mathbf{\theta}^{(m,k)}]_{m\leqslant |\mathcal{M}|+1, k\leqslant K}$ and $\gamma$ to obtain $\mathbf{R}^{(E_t)}$, yielding an implicitly symmetrically normalized adjacency matrix $\mathbf{Q}^{E_t}$, which can be calculated as follows:
\begin{equation}
\label{eq: combine_adj_norm_t_repeat}
\mathbf{Q}^{E_t}=
\lambda \tilde{\mathbf{A}} + (1 - \lambda)\mathbf{A}^{E_t}
\end{equation}
Modality evolution then replaces $\mathbf{Q}^{E_1}$ with $\mathbf{Q}^{E_t}$ in Eqs.~\eqref{eq: smooth} and \eqref{eq: approximated}, producing $\hat{\mathbf{H}}^{(t)}$ and ${\mathbf{H}^{(m,t)}}$.

We repeat this process for at most $T$ rounds, and stop early when topology changes are smaller than a threshold $\delta$, which is formulated as follows:
\begin{equation}
\frac{|\mathbf{R}^{(E_t)} - \mathbf{R}^{(E_{t-1})}|_F^2}{|\mathbf{R}^{(E_t)}|_F^2}
\leqslant
\delta
,
\end{equation}
This iterative co-evolution progressively refines topology and modality representations in a task-aware and mutually adaptive manner. The overall procedure of TMTE is presented in Appendix~\ref{appendix: pseudo code}.


\begin{table*}[htbp]
\setlength{\abovecaptionskip}{0.2cm}
\setlength{\belowcaptionskip}{-0.2cm}
\centering
\caption{Performance comparison on three \textbf{graph-centric} downstream tasks. The best, second best and third best results are highlighted in \textcolor{darkred}{\textbf{red}}, \textcolor{royalblue}{\textbf{blue}} and \textcolor{orange}{\textbf{orange}}, respectively.}
\label{tab: graph_centric}
\footnotesize 
\renewcommand{\arraystretch}{1.1}
\resizebox{\linewidth}{30mm}{
\setlength{\tabcolsep}{7mm}{
\begin{tabular}{c!{\vrule width 0.1pt}cc!{\vrule width 0.1pt}cc!{\vrule width 0.1pt}cc}
\hline\thickhline
\rowcolor{gray!80}
\multicolumn{1}{c!{\vrule width 0.1pt}}{\textbf{\textcolor{white}{Tasks}}} 
& \multicolumn{2}{c!{\vrule width 0.1pt}}{\textbf{\textcolor{white}{Node Classification}}} 
& \multicolumn{2}{c!{\vrule width 0.1pt}}{\textbf{\textcolor{white}{Link Prediction}}} 
& \multicolumn{2}{c}{\textbf{\textcolor{white}{Node Clustering}}} \\
\hline
\rowcolor{gray!10}
& \multicolumn{2}{c!{\vrule width 0.1pt}}{\textbf{Movies}} 
& \multicolumn{2}{c!{\vrule width 0.1pt}}{\textbf{DY}} 
& \multicolumn{2}{c}{\textbf{Toys}} \\
\cline{2-7}
\rowcolor{gray!10}
\multirow{-2}{*}{\diagbox[width=12em,height=2.4em]{\textbf{Methods}}{\textbf{Datasets}}} 
& ACC & F1-Score 
& MRR & Hits@3 
& NMI & ARI \\
\hline

GCN     
& $52.24_{\pm 0.15}$ & $41.80_{\pm 0.13}$ 
& $70.22_{\pm 0.35}$ & $83.80_{\pm 0.37}$ 
& $46.50_{\pm 0.15}$ & $31.66_{\pm 1.21}$ \\

\rowcolor{gray!10}
GCNII   
& $52.05_{\pm 0.28}$ & $42.62_{\pm 0.30}$ 
& $71.17_{\pm 0.20}$ & $85.69_{\pm 0.33}$ 
& $48.72_{\pm 0.20}$ & $32.84_{\pm 1.33}$ \\

GAT 
& $51.92_{\pm 0.32}$ & $41.47_{\pm 0.29}$ 
& $70.35_{\pm 0.24}$ & $81.92_{\pm 0.52}$ 
& $47.34_{\pm 0.22}$ & $31.13_{\pm 1.22}$ \\

\rowcolor{gray!10}
GATv2 
& $50.31_{\pm 0.43}$ & $39.62_{\pm 0.34}$ 
& $70.18_{\pm 0.36}$ & $83.72_{\pm 0.39}$ 
& $48.45_{\pm 0.18}$ & $32.72_{\pm 1.29}$ \\

\hline

MMGCN 
& $56.32_{\pm 0.18}$ & $44.52_{\pm 0.20}$ 
& $72.53_{\pm 0.42}$ & $85.78_{\pm 0.51}$ 
& $49.37_{\pm 0.28}$ & $33.42_{\pm 1.41}$ \\

\rowcolor{gray!10}
MGAT 
& $55.51_{\pm 0.33}$ & $43.95_{\pm 0.47}$ 
& $72.19_{\pm 0.38}$ & $85.47_{\pm 0.52}$ 
& $48.78_{\pm 0.28}$ & $32.89_{\pm 1.35}$ \\

LGMRec 
& $55.44_{\pm 0.27}$ & $42.78_{\pm 0.24}$ 
& $73.55_{\pm 0.18}$ & $86.28_{\pm 0.34}$ 
& $48.82_{\pm 0.29}$ & $32.63_{\pm 2.28}$ \\

\rowcolor{gray!10}
MLaGA 
& $56.25_{\pm 0.40}$ & \textcolor{orange}{$\mathbf{44.67_{\pm 0.51}}$} 
& $72.11_{\pm 0.42}$ & $85.23_{\pm 0.29}$ 
& $49.20_{\pm 0.32}$ & $33.26_{\pm 1.48}$ \\

GraphGPT-O 
& $52.48_{\pm 0.21}$ & $38.35_{\pm 0.42}$ 
& $70.04_{\pm 0.62}$ & $81.22_{\pm 1.36}$ 
& $45.34_{\pm 0.31}$ & $30.85_{\pm 1.72}$ \\

\rowcolor{gray!10}
Graph4MM 
& $55.48_{\pm 0.27}$ & $42.95_{\pm 0.26}$ 
& $74.31_{\pm 0.30}$ & $86.59_{\pm 0.19}$ 
& $46.74_{\pm 0.32}$ & $32.17_{\pm 1.21}$ \\

InstructG2I 
& $54.37_{\pm 0.19}$ & $42.55_{\pm 0.34}$ 
& $72.38_{\pm 0.24}$ & $86.11_{\pm 0.26}$ 
& $47.41_{\pm 0.59}$ & $31.26_{\pm 2.14}$ \\

\rowcolor{gray!10}
DMGC 
& $52.26_{\pm 0.64}$ & $42.54_{\pm 0.71}$ 
& \textcolor{royalblue}{$\mathbf{75.22_{\pm 0.38}}$} 
& \textcolor{royalblue}{$\mathbf{91.14_{\pm 0.40}}$} 

& $48.50_{\pm 0.30}$ & $31.95_{\pm 1.30}$ \\

DGF 
& \textcolor{royalblue}{$\mathbf{56.48_{\pm 0.30}}$} 
& \textcolor{royalblue}{$\mathbf{46.14_{\pm 0.27}}$} 
& \textcolor{orange}{$\mathbf{74.95_{\pm 0.35}}$} 
& \textcolor{orange}{$\mathbf{87.60_{\pm 0.52}}$} 
& \textcolor{royalblue}{$\mathbf{50.50_{\pm 0.28}}$}
& \textcolor{royalblue}{$\mathbf{34.62_{\pm 1.32}}$} \\

\rowcolor{gray!10}
MIG-GT 
& $54.93_{\pm 0.47}$ & $43.12_{\pm 0.58}$ 
& $72.63_{\pm 0.50}$ & $84.74_{\pm 0.57}$ 
& $47.86_{\pm 0.38}$ & $31.94_{\pm 1.52}$ \\

NTSFormer 
& \textcolor{orange}{$\mathbf{56.37_{\pm 0.25}}$} 
& $43.83_{\pm 0.18}$ 
& $72.15_{\pm 0.36}$ & $85.50_{\pm 0.42}$ 
& \textcolor{orange}{$\mathbf{49.42_{\pm 0.38}}$} 
& \textcolor{orange}{$\mathbf{34.01_{\pm 1.35}}$} \\

\rowcolor{gray!10}
UniGraph2 
& $54.51_{\pm 0.59}$ & $42.53_{\pm 0.42}$ 
& $71.23_{\pm 0.46}$ & $83.31_{\pm 0.29}$ 
& $47.17_{\pm 0.29}$ & $31.81_{\pm 1.30}$ \\

\hline

TMTE (Ours) 
& \textcolor{darkred}{$\mathbf{60.31_{\pm 0.24}}$} 
& \textcolor{darkred}{$\mathbf{53.48_{\pm 0.18}}$} 
& \textcolor{darkred}{$\mathbf{78.61_{\pm 0.24}}$} 
& \textcolor{darkred}{$\mathbf{93.20_{\pm 0.42}}$} 
& \textcolor{darkred}{$\mathbf{54.66_{\pm 0.30}}$} 
& \textcolor{darkred}{$\mathbf{39.52_{\pm 1.01}}$} \\

\hline\thickhline
\end{tabular}
}}
\vspace{0.15cm}
\end{table*}

\section{Experiments}
\label{sec: experiments}

In this section, we first describe the experimental setup, with full reproducibility details deferred to Appendix~\ref{appendix: dataset details} and Appendix~\ref{appendix: more experimental setups}. 
We then conduct comprehensive empirical evaluations to answer the following research questions: \textbf{Q1}: Does TMTE outperform existing unimodal GNNs and MGL methods on MAG datasets? 
\textbf{Q2}: What are the individual contributions of each module in TMTE? 
\textbf{Q3}: How robust is TMTE under noisy settings? 
\textbf{Q4}: Does TMTE outperform existing unimodal GNNs and MGL methods on non-graph multimodal datasets? 
\textbf{Q5}: What are the time and memory overheads, and how efficient is convergence? 
\textbf{Q6}: How robust is TMTE to variations in hyperparameters? Moreover, we provide additional experimental results on additional MAG datasets in Appendix~\ref{appendix: more experiments}.

\subsection{Experimental Setup}
\label{sec: exp setup}

\noindent \textbf{Datasets.}
We evaluate TMTE on 9 publicly MAG datasets and 1 non-graph multimodal dataset, covering diverse domains including social networks, recommendation systems, art, vision-language, and literature.
Specifically, the graph datasets include RedditS~\cite{RedditS} (social network), Movies~\cite{ni2019justifying} (movie network), two recommendation networks (Grocery, Ele-fashion)~\cite{ni2019justifying,hou2024bridging}, two video networks (DY, Bili Dance)~\cite{zhang2024ninerec}, SemArt~\cite{garcia2018_SemArt} (art network), Flickr30k~\cite{plummer2015_Flickr30k} (image-text network), and Goodreads~\cite{goodreads_1, goodreads_2} (book network). The non-graph dataset is MVSA~\cite{MVSA}.
Due to space limitations, detailed statistics and dataset descriptions are provided in Appendix~\ref{appendix: dataset details}.

\vspace{+0.1cm}
\noindent \textbf{Baselines.} We consider two categories of methods: (1) \textit{Unimodal GNNs}: GCN~\cite{gcn}, GCNII~\cite{gcnii}, GAT~\cite{gat}, and GATv2~\cite{gatv2}. (2) \textit{MGL methods}: MMGCN~\cite{mmgcn}, MGAT~\cite{mgat}, LGMRec~\cite{lgmrec}, MLaGA~\cite{mlaga}, GraphGPT-O~\cite{graphgpt_o}, Graph4MM~\cite{graph4mm}, InstructG2I~\cite{instructg2i}, DMGC~\cite{dmgc}, DGF~\cite{dgf}, MIG-GT~\cite{mig_gt}, NTSFormer~\cite{ntsformer}, and UniGraph2~\cite{unigraph2}. The details of these baselines are provided in Appendix~\ref{appendix: baseline details}.

\vspace{+0.1cm}
\noindent \textbf{Downstream Tasks.} We conduct comprehensive evaluations of these methods on: (1) \textit{Graph-centric tasks}: node classification, link prediction, and node clustering, and (2) \textit{Modality-centric tasks}: cross-modal retrieval, G2Text, and G2Image. Given the complexity of evaluation pipelines, hyperparameter configurations, and metrics, we provide full details in Appendix~\ref{appendix: more experimental setups}.

\subsection{Main Results (Answer for Q1)}
\label{sec: main results}

To answer \textbf{Q1}, we report results on three graph-centric tasks in Table~\ref{tab: graph_centric} and three modality-centric tasks in Table~\ref{tab: modality_centric}. We further evaluate these methods on a non-graph dataset in Appendix~\ref{appendix: non_graph_dataset}.

\vspace{+0.1cm}
\noindent \textbf{Graph Tasks}.
 TMTE consistently achieves the best performance across all datasets and evaluation metrics, demonstrating stable superiority over existing methods. Specifically, for node classification, TMTE improves over the second-best method by up to +3.83\% in Accuracy and +7.34\% in F1-score. For link prediction, TMTE achieves improvements of +3.39\% in MRR and +2.06\% in Hits@3 over the runner-up; for node clustering, TMTE improves NMI and ARI by +4.16\% and +5.51\% compared with the second-best results. Notably, although MGL methods generally surpass unimodal GNNs, none maintains consistent dominance across all tasks and metrics.

\vspace{+0.1cm}
\noindent \textbf{Modality Tasks}.
Our modality-centric tasks include retrieval and generation, where TMTE consistently achieves state-of-the-art results. On Ele-fashion retrieval, TMTE reaches MRR 95.22 and Hits@3 86.48, surpassing UniGraph2 by 2.70\% and 1.23\%. On Flickr30k G2Text, it outperforms Graph4MM by 1.05\% BLEU-4 and 2.68\% CIDEr. For SemArt G2Image, TMTE exceeds InstructG2I by 3.63\% CLIP-S and 1.87\% DINOv2-S, and GraphGPT-O by 3.74\% CLIP-S and 3.61\% DINOv2-S. These results highlight that fixed, task-agnostic graph topologies often fail in MGL, as different generation objectives require distinct semantic cues, making suboptimal structures inevitable in existing methods.
    
TMTE performs task-aware modality-topology co-evolution, where the procedure induces an adaptive graph structure tailored to the downstream objective, and the refined topology in turn enhances contextual multimodal representations. The consistent superiority of TMTE across all graph-centric and modality-centric tasks further confirms that dynamically evolving topology toward task-specific objectives is essential for effective MGL.

\begin{table*}[htbp]
\setlength{\abovecaptionskip}{0.2cm}
\setlength{\belowcaptionskip}{-0.2cm}
\centering
\caption{Performance comparison on three \textbf{modality-centric} downstream tasks. The best, second best and third best results are highlighted in \textcolor{darkred}{\textbf{red}}, \textcolor{royalblue}{\textbf{blue}} and \textcolor{orange}{\textbf{orange}}, respectively.}
\label{tab: modality_centric}
\footnotesize 
\renewcommand{\arraystretch}{1.1}
\resizebox{\linewidth}{30mm}{
\setlength{\tabcolsep}{7mm}{
\begin{tabular}{c!{\vrule width 0.1pt}cc!{\vrule width 0.1pt}cc!{\vrule width 0.1pt}cc}
\hline\thickhline
\rowcolor{gray!80}
\multicolumn{1}{c!{\vrule width 0.1pt}}{\textbf{\textcolor{white}{Tasks}}} 
& \multicolumn{2}{c!{\vrule width 0.1pt}}{\textbf{\textcolor{white}{Modality Retrieval}}} 
& \multicolumn{2}{c!{\vrule width 0.1pt}}{\textbf{\textcolor{white}{G2Text}}} 
& \multicolumn{2}{c}{\textbf{\textcolor{white}{G2Image}}} \\
\hline
\rowcolor{gray!10}
& \multicolumn{2}{c!{\vrule width 0.1pt}}{\textbf{Ele-fashion}} 
& \multicolumn{2}{c!{\vrule width 0.1pt}}{\textbf{Flickr30k}} 
& \multicolumn{2}{c}{\textbf{SemArt}} \\
\cline{2-7}
\rowcolor{gray!10}
\multirow{-2}{*}{\diagbox[width=12em,height=2.4em]{\textbf{Methods}}{\textbf{Datasets}}} 
& MRR & Hits@3
& BLEU-4 & CIDEr 
& CLIP-S & DINOv2-S \\
\hline

GCN
& $76.40_{\pm 0.44}$
& $68.55_{\pm 0.52}$
& $5.62_{\pm 0.23}$
& $38.43_{\pm 1.27}$
& $50.05_{\pm 0.18}$
& $35.52_{\pm 0.21}$ \\

\rowcolor{gray!10}
GCNII
& $77.41_{\pm 0.36}$
& $70.56_{\pm 0.48}$
& $5.27_{\pm 0.19}$
& $39.52_{\pm 1.11}$
& $49.51_{\pm 0.24}$
& $35.22_{\pm 0.17}$ \\

GAT
& $78.12_{\pm 0.57}$
& $70.35_{\pm 0.39}$
& $5.12_{\pm 0.21}$
& $39.48_{\pm 1.09}$
& $51.26_{\pm 0.33}$
& $36.24_{\pm 0.42}$ \\

\rowcolor{gray!10}
GATv2
& $78.24_{\pm 0.33}$
& $70.79_{\pm 0.58}$
& $5.28_{\pm 0.25}$
& $39.81_{\pm 1.36}$
& $51.48_{\pm 0.46}$
& $36.41_{\pm 0.90}$ \\

\hline

MMGCN
& $81.41_{\pm 0.52}$
& $73.56_{\pm 0.43}$
& $5.78_{\pm 0.17}$
& $43.52_{\pm 1.08}$
& $54.51_{\pm 0.37}$
& $39.46_{\pm 0.55}$ \\

\rowcolor{gray!10}
MGAT
& $81.70_{\pm 0.38}$
& $74.39_{\pm 0.49}$
& $6.49_{\pm 0.22}$
& $46.34_{\pm 1.47}$
& $55.38_{\pm 0.41}$
& $40.24_{\pm 0.63}$ \\

LGMRec
& $89.45_{\pm 0.41}$
& $80.26_{\pm 0.55}$
& $5.90_{\pm 0.18}$
& $60.40_{\pm 1.25}$
& $60.28_{\pm 0.52}$
& $44.87_{\pm 0.66}$ \\

\rowcolor{gray!10}
MLaGA
& $87.45_{\pm 0.59}$
& $79.22_{\pm 0.46}$
& $9.26_{\pm 0.31}$
& $70.94_{\pm 1.18}$
& $68.23_{\pm 0.73}$
& \textcolor{orange}{$\mathbf{52.88}_{\pm 0.84}$} \\

GraphGPT-O
& $87.84_{\pm 0.47}$
& $80.12_{\pm 0.42}$
& \textcolor{orange}{$\mathbf{9.57_{\pm 0.28}}$}
& \textcolor{orange}{$\mathbf{72.26_{\pm 1.09}}$}
& \textcolor{orange}{$\mathbf{70.47_{\pm 0.69}}$}
& $53.68_{\pm 0.77}$ \\

\rowcolor{gray!10}
Graph4MM
& $85.94_{\pm 0.35}$
& $78.13_{\pm 0.53}$
& \textcolor{royalblue}{$\mathbf{10.15_{\pm 0.33}}$}
& \textcolor{royalblue}{$\mathbf{74.46_{\pm 1.26}}$}
& $66.85_{\pm 0.58}$
& $51.28_{\pm 0.69}$ \\

InstructG2I
& $88.47_{\pm 0.48}$
& $80.76_{\pm 0.57}$
& $9.42_{\pm 0.29}$
& $71.25_{\pm 1.31}$
& \textcolor{royalblue}{$\mathbf{70.58_{\pm 0.75}}$}
& \textcolor{royalblue}{$\mathbf{55.42_{\pm 0.88}}$} \\

\rowcolor{gray!10}
DMGC
& $91.34_{\pm 0.39}$
& \textcolor{orange}{$\mathbf{85.22_{\pm 0.44}}$}
& $7.41_{\pm 0.27}$
& $60.95_{\pm 1.19}$
& $60.87_{\pm 0.56}$
& $46.35_{\pm 0.71}$ \\

DGF
& $90.84_{\pm 0.42}$
& $82.59_{\pm 0.36}$
& $6.75_{\pm 0.47}$
& $62.17_{\pm 1.20}$
& $61.53_{\pm 1.20}$
& $46.02_{\pm 1.14}$ \\

\rowcolor{gray!10}
MIG-GT
& $91.33_{\pm 0.51}$
& $83.27_{\pm 0.47}$
& $8.30_{\pm 0.22}$
& $64.27_{\pm 1.33}$
& $63.45_{\pm 0.67}$
& $47.62_{\pm 0.78}$ \\

NTSFormer
& \textcolor{orange}{$\mathbf{91.42_{\pm 0.37}}$}
& $83.50_{\pm 0.41}$
& $8.41_{\pm 0.24}$
& $65.29_{\pm 1.28}$
& $62.88_{\pm 0.63}$
& $47.81_{\pm 0.74}$ \\

\rowcolor{gray!10}
UniGraph2
& \textcolor{royalblue}{$\mathbf{92.52_{\pm 0.35}}$}
& \textcolor{royalblue}{$\mathbf{85.25_{\pm 0.41}}$}
& $8.52_{\pm 0.26}$
& $64.38_{\pm 1.42}$
& $63.48_{\pm 0.58}$
& $47.53_{\pm 0.61}$ \\

\hline
TMTE (Ours)
& \textcolor{darkred}{$\mathbf{95.22_{\pm 0.33}}$}
& \textcolor{darkred}{$\mathbf{86.48_{\pm 0.37}}$}
& \textcolor{darkred}{$\mathbf{11.20_{\pm 0.15}}$}
& \textcolor{darkred}{$\mathbf{77.14_{\pm 1.33}}$}
& \textcolor{darkred}{$\mathbf{74.21_{\pm 0.44}}$}
& \textcolor{darkred}{$\mathbf{57.29_{\pm 0.52}}$} \\

\hline\thickhline
\end{tabular}
}}
\vspace{0.15cm}
\end{table*}

\subsection{Ablation Study (Answer for Q2)}
\label{sec: ablation study}

To address \textbf{Q2}, we conduct an ablation study to assess the contribution of each TMTE module. As described in Sec.~\ref{sec: introduction} and Sec.~\ref{sec: methodology}, TMTE comprises three core mechanisms: topology evolution, modality evolution, and task-aware optimization.

Due to the complex interactions among these mechanisms, individual modules cannot be simply removed. Therefore, we define three variants to evaluate their effects: (1) \textit{One-shot Topology Evolution (One-shot TE)}: The MAG topology evolves only once per epoch based on the original modality features, without co-evolving with latent modality representations; (2) \textit{Only Modality Evolution (Only ME)}: The topology evolution is removed, and modality evolution relies solely on the original topology rather than the evolved one; and (3) \textit{Task-agnostic Evolution (Task-agnostic E)}: Modality and topology co-evolution ignores downstream task objectives, performing task-agnostic evolution followed by fine-tuning on the tasks.

\begin{table*}[htbp]
\setlength{\abovecaptionskip}{0.2cm}
\setlength{\belowcaptionskip}{-0.2cm}
\centering
\caption{\textbf{Ablation study} on 6 MAG datasets with graph-centric and modality-centric tasks.}
\label{tab: ablation study}
\footnotesize 
\renewcommand{\arraystretch}{1.1}
\resizebox{\linewidth}{22mm}{
\setlength{\tabcolsep}{7mm}{
\begin{tabular}{c!{\vrule width 0.1pt}cc!{\vrule width 0.1pt}cc!{\vrule width 0.1pt}cc}
\hline\thickhline
\rowcolor{gray!80}
\multicolumn{1}{c!{\vrule width 0.1pt}}{\textbf{\textcolor{white}{Tasks}}} 
& \multicolumn{2}{c!{\vrule width 0.1pt}}{\textbf{\textcolor{white}{Node Classification}}} 
& \multicolumn{2}{c!{\vrule width 0.1pt}}{\textbf{\textcolor{white}{Link Prediction}}} 
& \multicolumn{2}{c}{\textbf{\textcolor{white}{Node Clustering}}} \\
\hline
\rowcolor{gray!10}
& \multicolumn{2}{c!{\vrule width 0.1pt}}{\textbf{Movies}} 
& \multicolumn{2}{c!{\vrule width 0.1pt}}{\textbf{DY}} 
& \multicolumn{2}{c}{\textbf{Toys}} \\
\cline{2-7}
\rowcolor{gray!10}
\multirow{-2}{*}{\diagbox[width=12em,height=2.4em]{\textbf{Methods}}{\textbf{Datasets}}} 
& ACC & F1-Score 
& MRR & Hits@3 
& NMI & ARI \\
\hline

\textit{One-shot TE}
& \textcolor{orange}{$\mathbf{57.44_{\pm 0.24}}$} 
& \textcolor{orange}{$\mathbf{47.10_{\pm 0.19}}$} 
& \textcolor{royalblue}{$\mathbf{77.15_{\pm 0.40}}$} 
& \textcolor{royalblue}{$\mathbf{92.47_{\pm 0.29}}$} 
& $48.33_{\pm 0.24}$
& \textcolor{orange}{$\mathbf{33.27_{\pm 1.25}}$} \\

\rowcolor{gray!10}
\textit{Only ME}
& $56.31_{\pm 0.29}$
& $44.27_{\pm 0.30}$ 
& $73.45_{\pm 0.19}$ 
& $88.57_{\pm 0.42}$ 
& \textcolor{orange}{$\mathbf{48.40_{\pm 0.14}}$} 
& $32.24_{\pm 0.82}$ \\

\textit{Task-agnostic E}
& \textcolor{royalblue}{$\mathbf{58.75_{\pm 0.32}}$} 
& \textcolor{royalblue}{$\mathbf{49.61_{\pm 0.24}}$} 
& \textcolor{orange}{$\mathbf{76.18_{\pm 0.13}}$} 
& \textcolor{orange}{$\mathbf{91.52_{\pm 0.38}}$} 
& \textcolor{royalblue}{$\mathbf{52.30_{\pm 0.25}}$} 
& \textcolor{royalblue}{$\mathbf{36.49_{\pm 1.44}}$} \\


\rowcolor{gray!10}
TMTE (Ours) 
& \textcolor{darkred}{$\mathbf{60.31_{\pm 0.24}}$} 
& \textcolor{darkred}{$\mathbf{53.48_{\pm 0.18}}$} 
& \textcolor{darkred}{$\mathbf{78.61_{\pm 0.24}}$} 
& \textcolor{darkred}{$\mathbf{93.20_{\pm 0.42}}$} 
& \textcolor{darkred}{$\mathbf{54.66_{\pm 0.30}}$} 
& \textcolor{darkred}{$\mathbf{39.52_{\pm 1.01}}$} \\

\hline\thickhline
\rowcolor{gray!80}
\multicolumn{1}{c!{\vrule width 0.1pt}}{\textbf{\textcolor{white}{Tasks}}} 
& \multicolumn{2}{c!{\vrule width 0.1pt}}{\textbf{\textcolor{white}{Modality Retrieval}}} 
& \multicolumn{2}{c!{\vrule width 0.1pt}}{\textbf{\textcolor{white}{G2Text}}} 
& \multicolumn{2}{c}{\textbf{\textcolor{white}{G2Image}}} \\
\hline
\rowcolor{gray!10}
& \multicolumn{2}{c!{\vrule width 0.1pt}}{\textbf{Ele-fashion}} 
& \multicolumn{2}{c!{\vrule width 0.1pt}}{\textbf{Flickr30k}} 
& \multicolumn{2}{c}{\textbf{SemArt}} \\
\cline{2-7}
\rowcolor{gray!10}
\multirow{-2}{*}{\diagbox[width=12em,height=2.4em]{\textbf{Methods}}{\textbf{Datasets}}} 
& MRR & Hits@3
& BLEU-4 & CIDEr 
& CLIP-S & DINOv2-S \\
\hline

\textit{One-shot TE}
& \textcolor{orange}{$\mathbf{91.27_{\pm 0.42}}$} 
& \textcolor{orange}{$\mathbf{84.09_{\pm 0.23}}$} 
& \textcolor{orange}{$\mathbf{9.61_{\pm 0.30}}$} 
& \textcolor{orange}{$\mathbf{72.48_{\pm 0.17}}$} 
& \textcolor{royalblue}{$\mathbf{73.22_{\pm 0.48}}$} 
& \textcolor{orange}{$\mathbf{54.30_{\pm 0.38}}$} \\

\rowcolor{gray!10}
\textit{Only ME}
& $87.95_{\pm 0.42}$ 
& $82.26_{\pm 0.29}$
& $8.32_{\pm 0.35}$
& $67.42_{\pm 0.25}$ 
& $66.42_{\pm 0.35}$ 
& $52.24_{\pm 0.84}$ \\

\textit{Task-agnostic E}
& \textcolor{royalblue}{$\mathbf{94.10_{\pm 0.27}}$}
& \textcolor{royalblue}{$\mathbf{85.75_{\pm 0.30}}$}
& \textcolor{royalblue}{$\mathbf{10.82_{\pm 0.33}}$}
& \textcolor{royalblue}{$\mathbf{76.42_{\pm 1.58}}$}
& \textcolor{orange}{$\mathbf{72.37_{\pm 0.22}}$}
& \textcolor{royalblue}{$\mathbf{55.81_{\pm 0.61}}$} \\

\rowcolor{gray!10}
TMTE (Ours)
& \textcolor{darkred}{$\mathbf{95.22_{\pm 0.33}}$}
& \textcolor{darkred}{$\mathbf{86.48_{\pm 0.37}}$}
& \textcolor{darkred}{$\mathbf{11.20_{\pm 0.15}}$}
& \textcolor{darkred}{$\mathbf{77.14_{\pm 1.33}}$}
& \textcolor{darkred}{$\mathbf{74.21_{\pm 0.44}}$}
& \textcolor{darkred}{$\mathbf{57.29_{\pm 0.52}}$} \\

\hline\thickhline
\end{tabular}
}}
\vspace{0.15cm}
\end{table*}

\begin{figure*}[htb]
 \centering
  \includegraphics[width=0.998\textwidth]{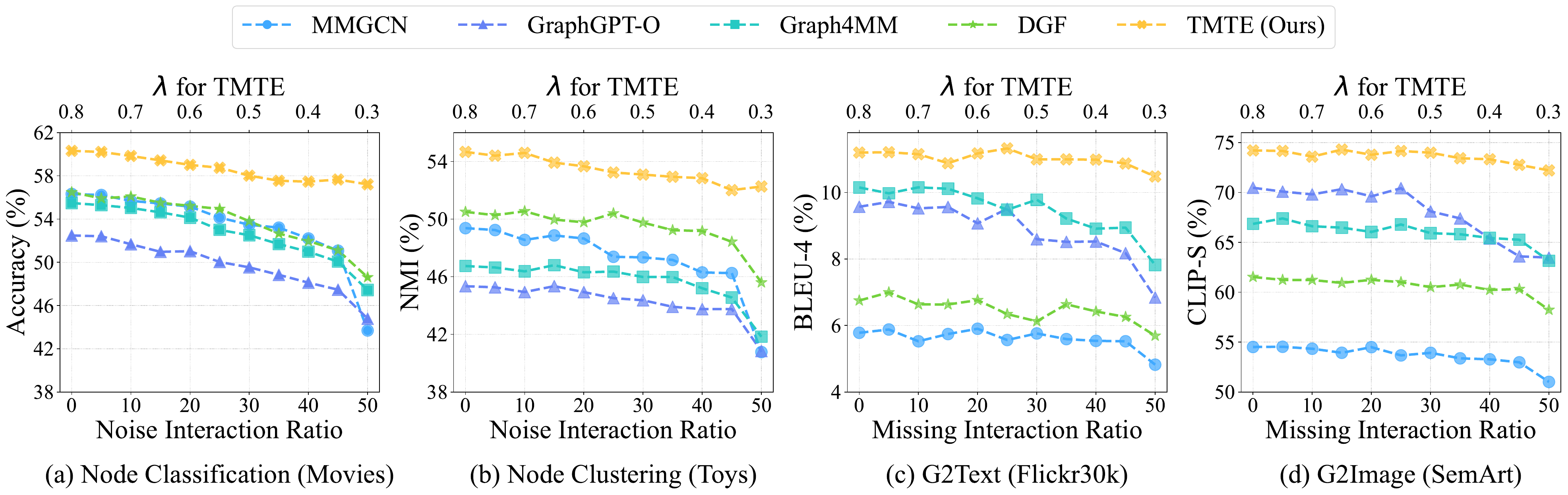}
  \caption{\textbf{Experimental results of our robustness analysis}. We investigate two types of topological noise, including \textbf{noise interactions} (i.e., randomly adding edges), which are presented in (a) and (b); and \textbf{missing interactions} (i.e., randomly removing edges), which are presented in (c) and (d).}
\label{fig: robust}
\end{figure*}

Based on this, Table~\ref{tab: ablation study} confirms the importance of these core mechanisms. Each variant induces a noticeable performance drop. Notably, \textit{Only Modality Evolution} generally causes the largest degradation, highlighting the critical role of topology evolution. The performance drop from \textit{One-shot Topology Evolution} is often more pronounced than that of \textit{Task-agnostic Evolution}, indicating that multi-round co-evolution of modality and topology progressively guides the learning of better structures, which single-round evolution based on the original modality space cannot achieve. Finally, the decrease under \textit{Task-agnostic Evolution} demonstrates the necessity of incorporating downstream objectives into the co-evolution process. Overall, all components are essential for TMTE's performance and contribute significantly to its overall effectiveness.

\subsection{Robustness Analysis (Answer for Q3)}
\label{sec: robustness study}
To answer \textbf{Q3}, we investigate the robustness of TMTE under topology noise scenarios, including noise interaction and missing interaction, which stem from the inherent topological limitations of real-world MAGs discussed in Sec.~\ref{sec: introduction}. As illustrated in Fig.~\ref{fig: robust}, when the level of topological noise gradually increases, TMTE maintains remarkable robustness and stable performance by appropriately adjusting the $\lambda$ parameter (Eqs.~\eqref{eq: combine_adj_norm_t} and~\eqref{eq: combine_adj_norm_t_repeat}), which effectively controls the weight of the original graph topology during the topological evolution process and ensures a balanced integration of evolving structural information. In contrast, the baseline method exhibits a significant  performance decline as the noise level increases.

\subsection{Performances on Non-graph Datasets (Answer for Q4)}
\label{sec: non_graph_dataset}

\begin{figure}
    \centering
    \includegraphics[width=0.25\textwidth]{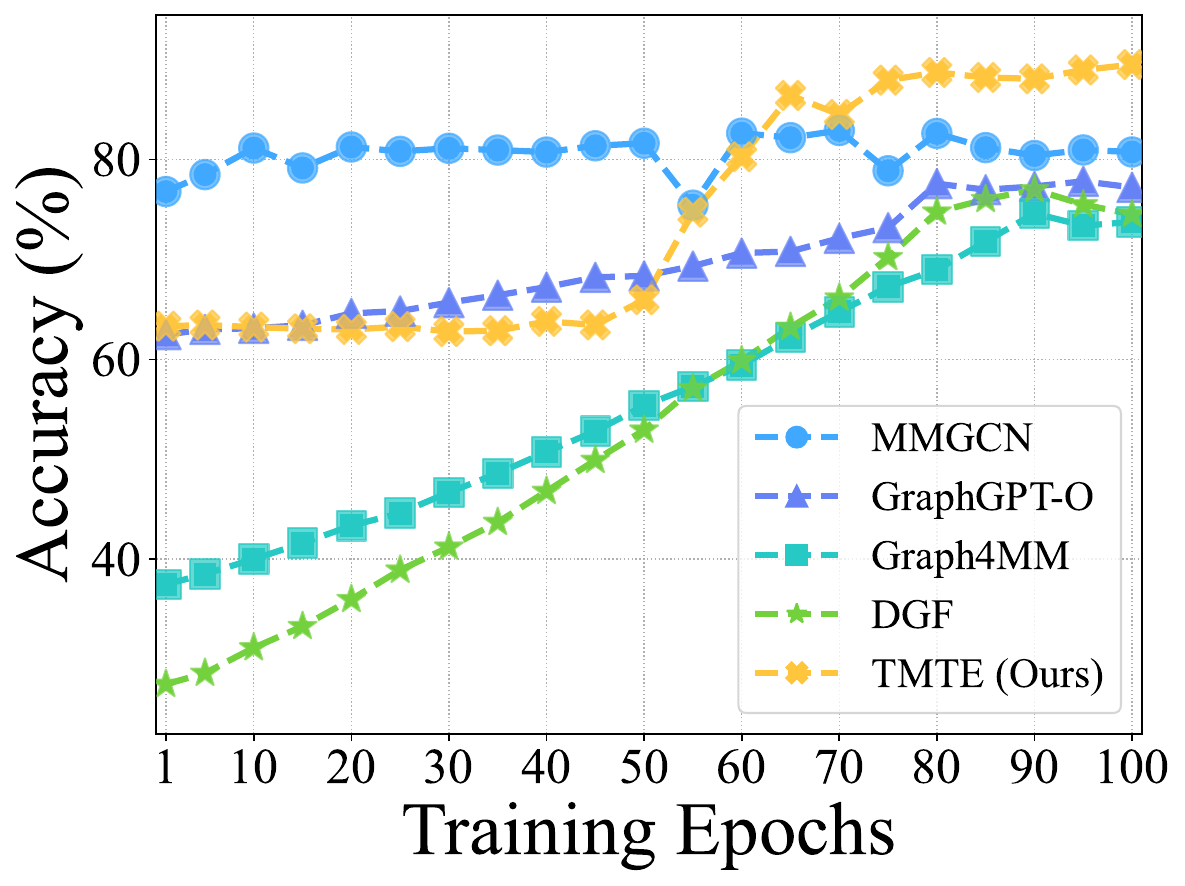}
    \caption{\textbf{Accuracy curves} under the MVSA dataset.}
    \label{fig: MVSA}
\end{figure}
    
To further evaluate TMTE on multimodal data without explicitly constructed topological structures, we conduct experiments on the non-graph MVSA dataset~\cite{MVSA}, with details provided in Appendix~\ref{appendix: dataset details}. The results are shown in Fig.~\ref{fig: MVSA}. As observed, TMTE maintains an accuracy of approximately 63\% during the first 50 epochs, followed by rapid improvement that approaches convergence around the 75th epoch and achieves the best overall performance. In contrast, other methods exhibit less favorable optimization behavior, either converging more slowly (e.g., DGF) or quickly plateauing at suboptimal levels (e.g., MMGCN), demonstrating the superior convergence and robustness of TMTE without predefined topological structures.

\subsection{Efficiency Analysis (Answer for Q5)}
\label{sec: efficiency analysis}

\begin{table}[htb]
\setlength{\abovecaptionskip}{0.2cm}
\setlength{\belowcaptionskip}{-0.2cm}
\centering
\footnotesize 
\renewcommand{\arraystretch}{1.1}
\setlength{\tabcolsep}{5mm}

\caption{\textbf{Per-epoch efficiency} on Movies.}
\label{tab: efficiency}

\resizebox{1\columnwidth}{!}{
\begin{tabular}{l!{\vrule width 0.1pt}c!{\vrule width 0.1pt}c!{\vrule width 0.1pt}c}
\hline\thickhline
\rowcolor{gray!80}
\textbf{\textcolor{white}{Method}} &
\textbf{\textcolor{white}{E-Train. (s)}} &
\textbf{\textcolor{white}{E-Infer. (s)}} &
\textbf{\textcolor{white}{Param.}} \\
\hline
MMGCN & \textcolor{royalblue}{$\mathbf{0.0891}$} & \textcolor{royalblue}{$\mathbf{0.0352}$} & $15.8$M \\
\rowcolor{gray!10}
GraphGPT-O & $1.8257$ & $0.5167$ & $48.7$M \\

Graph4MM & $0.1599$ & $0.5311$ & $33.2$M \\
\rowcolor{gray!10}
InstructG2I & $24.9133$ & $0.8211$ & $62.3$M \\
DMGC & \textcolor{orange}{$\mathbf{0.1362}$} & $0.0441$ & \textcolor{orange}{$\mathbf{0.85}$M} \\
\rowcolor{gray!10}
DGF & $0.1599$ & \textcolor{orange}{$\mathbf{0.0391}$} & $0.9$M \\
NTSFormer & \textcolor{darkred}{$\mathbf{0.0611}$} & \textcolor{darkred}{$\mathbf{0.0206}$} & \textcolor{royalblue}{$\mathbf{0.7}$M} \\
\rowcolor{gray!10}
Unigraph2 & $0.4933$ & $0.0646$ & $36.6$M \\
TMTE (Ours) & $0.4225$ & $0.1099$ & \textcolor{darkred}{$\mathbf{0.5}$M} \\
\hline\thickhline
\end{tabular}
}
\end{table}

To answer \textbf{Q4}, we provide a detailed evaluation of TMTE in terms of computation and storage efficiency. As shown in Table~\ref{tab: efficiency}, TMTE's per-epoch training time ($0.4225$ s) and inference time ($0.1099$ s) are higher than the fastest baselines such as NTSFormer ($0.0611$ s) and MMGCN ($0.0891$ s), reflecting a moderate additional computational cost per batch. Notably, despite this overhead, TMTE maintains a very small parameter footprint ($0.5$M). This modest increase in training and inference time is acceptable considering TMTE’s consistent performance advantages in both graph-centric and modality-centric tasks. In summary, TMTE achieves superior effectiveness while remaining deployable, providing a favorable trade-off between efficiency and task performance. Moreover, we provide additional theoretical analysis of TMTE on large-scale graphs in terms of stability and convergence guarantee (Theorems~\ref{thm: inexact stability}and~\ref{thm: contraction large scale}).

\subsection{Hyperparameter Analysis (Answer for Q6)}
\label{sec: hyperparameter}

\begin{figure*}
 \centering
  \includegraphics[width=0.998\textwidth]{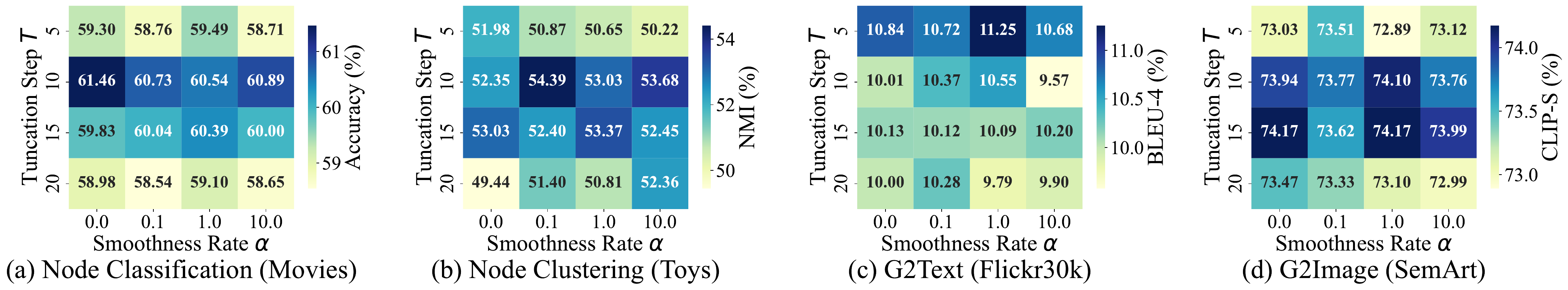}
  \caption{\textbf{Hyperparameter analysis for $\alpha$ and $T$} on four datasets and tasks.}
\label{fig: heatmap_T_alpha}
\end{figure*}

In this section, we investigate the impacts of key hyperparameters in TMTE. Specifically, our evaluations focus on: (1) the number of perspectives for similarity metric learning (i.e., $K$ in Eq.~\eqref{eq: sim channel}); (2) the trade-off parameter between the original topology and the evolved topology (i.e., $\lambda$ in Eqs.~\eqref{eq: combine_adj_norm_t} and~\eqref{eq: combine_adj_norm_t_repeat}); (3) the smoothness rate for modality representation (i.e., $\alpha$ in Eq.~\eqref{eq: smooth}); and (4) the number of truncation steps for power series approximation (i.e., $T$ in Eq.~\eqref{eq: approximated}). For any TMTE hyperparameters not discussed, we describe their fixed values or search ranges in Appendix~\ref{appendix: more experimental setups}.

\vspace{+0.1cm}
\noindent \textbf{Hyperparameters for Topology Evolution ($K$ and $\lambda$).}
For $K$, which denotes the number of perspectives in similarity metric learning (Eq.~\eqref{eq: sim channel}), it controls the diversity of learned perspectives during topology evolution. A small $K$ may limit the model’s ability to capture heterogeneous structural patterns, while an excessively large $K$ can introduce redundancy and overfitting.

As shown in Table~\ref{tab: hyper_k}, performance first improves and then declines as $K$ increases. Specifically, increasing $K$ from 1 to 4 significantly boosts performance on Movies (ACC improves from 57.59\% to 60.31\%, and F1-Score from 48.23\% to 53.48\%), demonstrating that multi-perspective similarity learning effectively enhances the quality of learned topology. However, when $K$ further increases to 8 or 16, performance slightly drops. This suggests that too many perspectives may introduce noisy or redundant similarity channels, weakening discriminative ability. Therefore, a moderate number of perspectives (e.g., $K=4$) achieves the best trade-off between expressiveness and robustness.

\begin{table}
\setlength{\abovecaptionskip}{0.2cm}
\setlength{\belowcaptionskip}{-0.2cm}
\centering
\footnotesize 
\renewcommand{\arraystretch}{1.1}
\setlength{\tabcolsep}{1.5mm}

\caption{\textbf{Impact} of $K$ on Movies.}
\label{tab: hyper_k}

\resizebox{1\columnwidth}{10mm}{
\setlength{\tabcolsep}{7mm}{
\begin{tabular}{c|cc}
\hline\thickhline
\rowcolor{gray!80}

\cline{2-3}
\rowcolor{gray!10}
$K$ & ACC & F1-Score \\
\hline

$1$
& \textcolor{orange}{$\mathbf{57.59_{\pm 0.17}}$} 
& \textcolor{orange}{$\mathbf{48.23_{\pm 0.35}}$} \\

\rowcolor{gray!10}
$4$
& \textcolor{darkred}{$\mathbf{60.31_{\pm 0.24}}$} 
& \textcolor{darkred}{$\mathbf{53.48_{\pm 0.18}}$} \\

$8$
& \textcolor{royalblue}{$\mathbf{59.40_{\pm 0.32}}$} 
& \textcolor{royalblue}{$\mathbf{51.26_{\pm 0.53}}$} \\

\rowcolor{gray!10}
$16$
& $57.38_{\pm 0.29}$ 
& $47.96_{\pm 0.33}$ \\

\hline\thickhline
\end{tabular}
}
}
\end{table}

For $\lambda$, which balances the original topology and the evolved topology (Eqs.~\eqref{eq: combine_adj_norm_t} and~\eqref{eq: combine_adj_norm_t_repeat}), we have discussed its effect in Sec.~\ref{sec: robustness study}. As shown in Fig.~\ref{fig: robust}, $\lambda$ plays a crucial role in noisy topology scenarios. When topological noise increases, appropriately adjusting $\lambda$ enables TMTE to reduce reliance on corrupted original structures while leveraging the evolved topology. This adaptive weighting mechanism ensures stable performance under varying noise levels.

\vspace{+0.1cm}
\noindent \textbf{Hyperparameters for Modality Evolution ($\alpha$ and $T$).} As illustrated in Fig.~\ref{fig: heatmap_T_alpha}, we analyze the smoothness rate $\alpha$ (Eq.~\eqref{eq: smooth}) and the truncation step $T$ (Eq.~\ref{eq: approximated}) across four datasets and tasks, including Movies (node classification), Toys (node clustering), Flickr30k (G2Text), and SemArt (G2Image).

For $\alpha$, which controls the strength of modality representation smoothing, we observe that varying $\alpha$ across a wide range has relatively limited impact on the final performance. As shown in Fig.~\ref{fig: heatmap_T_alpha}, although extremely small or large values may lead to slight fluctuations, the overall performance remains consistently stable across different datasets. This indicates that TMTE is not sensitive to the choice of $\alpha$. In other words, the modality evolution mechanism maintains effectiveness under different smoothing strengths.

For $T$, it determines the approximation depth of the power series expansion. Small $T$ values (e.g., 5) may under-approximate higher-order propagation. Increasing $T$ to moderate levels (e.g., 10 or 15) generally improves performance, as higher-order interactions are better captured. However, further increasing $T$ (e.g., 20) slightly degrade performance and increase computational complexity.

\section{Related Works}

\textbf{Graph Neural Networks (GNNs).} Earlier research on deep graph learning extends convolution to handle graphs~\cite{bruna2013spectral} but comes with notable parameter counts. To this end, GCN~\cite{gcn} simplifies graph convolution by utilizing a first-order Chebyshev filter to capture local neighborhood information. Moreover, GAT~\cite{gat} adopts graph attention, allowing weighted aggregation. GraphSAGE~\cite{graphsage} introduces a variety of learnable aggregation functions for performing message aggregation. Moreover, recent studies extend GNN optimization from the centralized setting to the decentralized setting~\cite{fedtad, fedgta, fgl_unlearning, fedgfm, power, fedgala}. More GNN research can be found in surveys and benchmarks~\cite{wu2020comprehensive, zhou2020graph, data_centric_fgl_survey, openfgl}.

\vspace{+0.15cm} \noindent \textbf{Multimodal Graph Learning (MGL).} Multimodal graph learning (MGL)~\cite{ektefaie2023multimodal, wan2026openmag} aims to integrate heterogeneous modalities (e.g., vision, language, audio) within unified graph structures, enabling joint modeling of semantic and structural dependencies. Existing approaches generally extend classical graph learning techniques to multimodal settings, such as graph convolutions, attention mechanisms, hypergraph modeling, and contrastive objectives. In recommendation scenarios, prior studies~\cite{mmgcn, mgat, lgmrec, mig_gt} focus on constructing modality-aware interaction graphs and designing adaptive fusion strategies to disentangle collaborative and modality-specific signals. These methods improve robustness under sparse or noisy interactions by enhancing high-order propagation and long-range dependency modeling. Beyond recommendation, several works~\cite{dmgc, dgf, ntsformer, unigraph2} investigate more general multimodal attributed graph learning problems, including heterophily-aware modeling, feature denoising, cold-start node classification, and cross-domain representation learning. They typically emphasize structural filtering, contrastive alignment, and unified embedding spaces to obtain discriminative and transferable representations. With the rapid development of large language models (LLMs), recent efforts~\cite{yoon2023mmgl, mlaga, graphgpt_o, graph4mm, instructg2i, yan2025graph} explore graph-augmented multimodal reasoning and generation. These approaches align multimodal embeddings with topology and incorporate structural priors into foundation models, enabling structure-aware inference, controllable generation, and systematic benchmarking.

\vspace{0.05cm}
\section{Conclusion}
\label{sec: conclusion}

In this paper, we revisit the role of graph topology in multimodal-attributed graphs and identify its inherent limitations, including noisy connections, missing connections, and task-agnostic interactions. Motivated by the bidirectional coupling between modalities and topology, we propose a novel MGL framework, TMTE, enabling task-aware co-evolution of graph structure and multimodal representations. TMTE iteratively refines the initial topology using evolving modality embeddings, which in turn guide representation learning toward downstream objectives. Experiments show that TMTE effectively mitigates topology noise, consistently improving performance on both graph-centric and modality-centric tasks.



\newpage
\bibliographystyle{ACM-Reference-Format}
\balance
\bibliography{citation}

\appendix
\newpage

\section{Performances on Additional MAG Datasets}
\label{appendix: more experiments}

\begin{table*}[htbp]
\setlength{\abovecaptionskip}{0.2cm}
\setlength{\belowcaptionskip}{-0.2cm}
\centering
\caption{Additional performance comparison on Grocery, Bili Dance and RedditS datasets. The best, second best and third best results are highlighted in \textcolor{darkred}{\textbf{red}}, \textcolor{royalblue}{\textbf{blue}} and \textcolor{orange}{\textbf{orange}}, respectively.}
\label{tab: graph_centric_more}
\footnotesize 
\renewcommand{\arraystretch}{1.1}
\resizebox{\linewidth}{30mm}{
\setlength{\tabcolsep}{7mm}{
\begin{tabular}{c!{\vrule width 0.1pt}cc!{\vrule width 0.1pt}cc!{\vrule width 0.1pt}cc}
\hline\thickhline
\rowcolor{gray!80}
\multicolumn{1}{c!{\vrule width 0.1pt}}{\textbf{\textcolor{white}{Tasks}}} 
& \multicolumn{2}{c!{\vrule width 0.1pt}}{\textbf{\textcolor{white}{Node Classification}}} 
& \multicolumn{2}{c!{\vrule width 0.1pt}}{\textbf{\textcolor{white}{Link Prediction}}} 
& \multicolumn{2}{c}{\textbf{\textcolor{white}{Node Clustering}}} \\
\hline
\rowcolor{gray!10}
& \multicolumn{2}{c!{\vrule width 0.1pt}}{\textbf{Grocery}} 
& \multicolumn{2}{c!{\vrule width 0.1pt}}{\textbf{Bili Dance}} 
& \multicolumn{2}{c}{\textbf{RedditS}} \\
\cline{2-7}
\rowcolor{gray!10}
\multirow{-2}{*}{\diagbox[width=12em,height=2.4em]{\textbf{Methods}}{\textbf{Datasets}}} 
& ACC & F1-Score 
& MRR & Hits@3 
& NMI & ARI \\
\hline

GCN     
& $80.19_{\pm 0.30}$ & $72.25_{\pm 0.43}$ 
& $37.50_{\pm 0.22}$ & $47.30_{\pm 0.50}$ 
& $75.15_{\pm 0.25}$ & $71.08_{\pm 2.43}$ \\

\rowcolor{gray!10}
GCNII   
& $78.32_{\pm 0.24}$ & $70.12_{\pm 0.30}$ 
& $37.42_{\pm 0.26}$ & $47.41_{\pm 0.58}$ 
& $76.76_{\pm 0.30}$ & $72.29_{\pm 2.55}$ \\

GAT 
& $80.22_{\pm 0.37}$ & $72.38_{\pm 0.46}$ 
& $36.74_{\pm 0.31}$ & $46.62_{\pm 0.34}$ 
& $75.32_{\pm 0.27}$ & $70.98_{\pm 3.48}$ \\

\rowcolor{gray!10}
GATv2 
& $80.33_{\pm 0.42}$ & $73.05_{\pm 0.28}$ 
& $37.64_{\pm 0.28}$ & $47.41_{\pm 0.56}$ 
& $77.12_{\pm 0.28}$ & $72.33_{\pm 2.28}$ \\

\hline

MMGCN 
& $82.12_{\pm 0.38}$ & \textcolor{orange}{$\mathbf{74.91_{\pm 0.22}}$} 
& $38.29_{\pm 0.27}$ & $48.94_{\pm 0.46}$ 
& $77.59_{\pm 0.32}$ & $72.67_{\pm 2.53}$ \\

\rowcolor{gray!10}
MGAT 
& \textcolor{royalblue}{$\mathbf{82.48_{\pm 0.42}}$} 
& \textcolor{royalblue}{$\mathbf{75.43_{\pm 0.34}}$} 
& $38.96_{\pm 0.33}$ & $49.19_{\pm 0.58}$ 
& $77.31_{\pm 0.29}$ & $73.14_{\pm 2.52}$ \\

LGMRec 
& $80.06_{\pm 0.39}$ & $73.21_{\pm 0.49}$ 
& $39.55_{\pm 0.27}$ & $47.62_{\pm 0.51}$ 
& $73.41_{\pm 0.61}$ & $74.02_{\pm 3.34}$ \\

\rowcolor{gray!10}
MLaGA 
& $81.52_{\pm 0.36}$ & $74.83_{\pm 0.32}$ 
& $39.14_{\pm 0.88}$ & $49.82_{\pm 1.34}$ 
& $77.69_{\pm 0.33}$ & $73.12_{\pm 2.64}$ \\

GraphGPT-O 
& $78.27_{\pm 0.21}$ & $70.72_{\pm 0.55}$ 
& $37.22_{\pm 0.48}$ & $46.54_{\pm 0.24}$ 
& $75.42_{\pm 0.61}$ & $71.15_{\pm 3.04}$ \\

\rowcolor{gray!10}
Graph4MM 
& $79.36_{\pm 0.24}$ & $72.15_{\pm 0.28}$ 
& $38.48_{\pm 0.24}$ & $47.45_{\pm 0.31}$ 
& $75.28_{\pm 0.14}$ & $70.47_{\pm 1.88}$ \\

InstructG2I 
& $80.42_{\pm 0.28}$ & $73.54_{\pm 0.20}$ 
& $39.12_{\pm 0.39}$ & $49.83_{\pm 0.27}$ 
& $75.36_{\pm 0.52}$ & $71.42_{\pm 2.18}$ \\

\rowcolor{gray!10}
DMGC 
& $81.55_{\pm 0.40}$ & $71.32_{\pm 0.48}$ 
& \textcolor{royalblue}{$\mathbf{41.37_{\pm 0.26}}$}
& \textcolor{royalblue}{$\mathbf{55.63_{\pm 0.19}}$} 
& \textcolor{royalblue}{$\mathbf{79.31_{\pm 0.32}}$} 
& \textcolor{royalblue}{$\mathbf{75.10_{\pm 3.14}}$} \\

DGF 
& \textcolor{orange}{$\mathbf{82.22_{\pm 0.15}}$} 
& $74.24_{\pm 0.57}$ 
& \textcolor{orange}{$\mathbf{39.80_{\pm 0.32}}$} 
& \textcolor{orange}{$\mathbf{50.10_{\pm 0.35}}$} 
& \textcolor{orange}{$\mathbf{78.50_{\pm 0.30}}$} 
& \textcolor{orange}{$\mathbf{74.47_{\pm 2.55}}$} \\

\rowcolor{gray!10}
MIG-GT 
& $80.88_{\pm 0.41}$ 
& $72.26_{\pm 0.39}$ 
& $37.82_{\pm 0.44}$ 
& $48.31_{\pm 0.69}$ 
& $76.34_{\pm 0.42}$ 
& $71.58_{\pm 2.73}$ \\

NTSFormer 
& $81.85_{\pm 0.30}$ 
& $74.10_{\pm 0.32}$ 
& $38.73_{\pm 0.34}$ 
& $49.27_{\pm 0.49}$ 
& $77.25_{\pm 0.32}$ 
& $72.81_{\pm 2.46}$ \\

\rowcolor{gray!10}
UniGraph2 
& $80.52_{\pm 0.36}$ 
& $72.19_{\pm 0.43}$ 
& $37.47_{\pm 0.32}$ 
& $47.78_{\pm 0.38}$ 
& $75.92_{\pm 0.44}$ 
& $71.84_{\pm 2.58}$ \\

\hline

TMTE (Ours) 
& \textcolor{darkred}{$\mathbf{84.18_{\pm 0.30}}$} 
& \textcolor{darkred}{$\mathbf{80.80_{\pm 0.45}}$} 
& \textcolor{darkred}{$\mathbf{43.61_{\pm 0.24}}$} 
& \textcolor{darkred}{$\mathbf{60.61_{\pm 0.31}}$} 
& \textcolor{darkred}{$\mathbf{82.28_{\pm 0.21}}$} 
& \textcolor{darkred}{$\mathbf{78.59_{\pm 1.44}}$} \\

\hline\thickhline
\end{tabular}
}}
\end{table*}

We provide evaluations on additional MAG datasets in Table~\ref{tab: graph_centric_more}. As observed, TMTE consistently outperforms all baselines on the Grocery, Bili Dance, and RedditS datasets across node classification, link prediction, and node clustering metrics. Specifically, for node classification on Grocery, TMTE improves over the second-best method by +1.96\% in Accuracy and +6.36\% in F1-score. For link prediction on Bili Dance, it achieves gains of +2.24\% in MRR and +5.98\% in Hits@3. In node clustering on RedditS, TMTE boosts NMI and ARI by +3.19\% and +3.49\% compared with the next-best approach. These results highlight that, while some multimodal graph methods (e.g., DGF, DMGC) outperform unimodal GNNs on certain tasks, TMTE demonstrates stable and consistent superiority across all datasets and evaluation metrics.

\section{Theoretical Proofs}
\label{appendix: proofs}

\begin{theorem}[Smooth Fused Representations of MAG]
\label{thm: smooth fused rep}
For a MAG with fused representation $\bar{\mathbf{H}}=\frac{1}{\mathcal{M}} \sum_{m\in\mathcal{M}}\mathbf{H}^{(m)}$ and a symmetrically normalized adjacency matrix of evolved topology 
$\mathbf{Q}^{E_1}=\lambda\,\tilde{\mathbf{A}} + (1-\lambda)\mathbf{A}^{E_1}$, the smooth fused representations can be approximated as:
\begin{equation}
    \hat{\mathbf{H}}
    =
    \frac{1}{\alpha+1}
    \sum_{t=0}^{T}
    \left(
    \frac{\alpha\lambda}{\alpha+1}\tilde{\mathbf{A}}
    +
    \frac{\alpha(1-\lambda)}{\alpha+1}\mathbf{A}^{E_1}
    \right)^t
    \bar{\mathbf{H}}.
\end{equation}
\end{theorem}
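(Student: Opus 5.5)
We start from the objective $f(\hat{\mathbf{H}})$ in Eq.~\eqref{eq: smooth} with $\mathbf{Q}^{E_1}$ as in Eq.~\eqref{eq: combine_adj_norm_t}. First we check that $\mathbf{Q}^{E_1}$ may be treated as symmetric. $\tilde{\mathbf{A}}$ is symmetric. The anchor-induced $\mathbf{A}^{E_1}$ of Eq.~\eqref{eq: recover} is not symmetric in general, but it is similar to the symmetric matrix ${\mathbf{\Delta}^{E_1}}^{-1/2}\mathbf{R}^{E_1}{\mathbf{\Lambda}^{E_1}}^{-1}{\mathbf{R}^{E_1}}^\top{\mathbf{\Delta}^{E_1}}^{-1/2}$. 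The statement describes $\mathbf{Q}^{E_1}$ as a symmetrically normalized adjacency, so we take $\mathbf{Q}^{E_1}$ symmetric. Then the gradient of the trace term is $2\alpha(\mathbf{I}-\mathbf{Q}^{E_1})\hat{\mathbf{H}}$. Setting $\partial f/\partial\hat{\mathbf{H}}=2(\hat{\mathbf{H}}-\bar{\mathbf{H}})+2\alpha(\mathbf{I}-\mathbf{Q}^{E_1})\hat{\mathbf{H}}=0$ gives the linear system
\begin{equation*}
\big((1+\alpha)\mathbf{I}-\alpha\mathbf{Q}^{E_1}\big)\hat{\mathbf{H}}=\bar{\mathbf{H}},
\qquad\text{i.e.}\qquad
\hat{\mathbf{H}}=\frac{1}{1+\alpha}\Big(\mathbf{I}-\tfrac{\alpha}{1+\alpha}\mathbf{Q}^{E_1}\Big)^{-1}\bar{\mathbf{H}}.
\end{equation*}

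The main step is to justify invertibility and the Neumann expansion $(\mathbf{I}-\mathbf{P})^{-1}=\sum_{t\ge0}\mathbf{P}^t$ with $\mathbf{P}=\frac{\alpha}{1+\alpha}\mathbf{Q}^{E_1}$. For this we bound the spectral radius $\rho(\mathbf{Q}^{E_1})\le 1$.
\begin{itemize}
\item For $\tilde{\mathbf{A}}=\mathbf{D}^{-1/2}\mathbf{A}\mathbf{D}^{-1/2}$, the eigenvalues lie in $[-1,1]$ by the standard normalized-Laplacian argument, so $\|\tilde{\mathbf{A}}\|_2\le1$.
\item For $\mathbf{A}^{E_1}$, the thresholding makes $\mathbf{R}^{E_1}$ nonnegative. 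Then $\mathbf{A}^{E_1}={\mathbf{\Delta}^{E_1}}^{-1}\mathbf{R}^{E_1}{\mathbf{\Lambda}^{E_1}}^{-1}{\mathbf{R}^{E_1}}^\top$ is row-stochastic: its row sums are $\sum_k \mathbf{R}_{ik}\,\Lambda_{kk}^{-1}\sum_j\mathbf{R}_{jk}/\Delta_{ii}=\sum_k\mathbf{R}_{ik}/\Delta_{ii}=1$. Hence $\rho(\mathbf{A}^{E_1})\le\|\mathbf{A}^{E_1}\|_\infty=1$. Equivalently, its symmetric similar form has spectrum in $[0,1]$.
\end{itemize}

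The obstacle is combining these two bounds, since a spectral-radius bound is not subadditive for non-commuting matrices. Two routes are available.
\begin{itemize}
\item \textbf{Symmetric route.} Under the symmetric-normalization convention of the statement, work with the symmetric version of $\mathbf{A}^{E_1}$. Its operator $2$-norm is at most $1$, so by convexity $\|\mathbf{Q}^{E_1}\|_2\le\lambda+(1-\lambda)=1$.
\item \textbf{Induced-norm route.} Use any single induced norm with $\|\tilde{\mathbf{A}}\|\le1$ and $\|\mathbf{A}^{E_1}\|\le1$. Then $\|\mathbf{P}\|\le\frac{\alpha}{1+\alpha}<1$.
\end{itemize}
Either way $\mathbf{P}$ is a strict contraction, so $(1+\alpha)\mathbf{I}-\alpha\mathbf{Q}^{E_1}$ is invertible, the series converges, and $f$ is strictly convex, making the stationary point the unique minimizer.

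Finally, we expand $\mathbf{P}=\frac{\alpha\lambda}{\alpha+1}\tilde{\mathbf{A}}+\frac{\alpha(1-\lambda)}{\alpha+1}\mathbf{A}^{E_1}$ and truncate the Neumann series at $T$ steps, which gives exactly the displayed formula. The truncation error is controlled by
\begin{equation*}
\frac{1}{1+\alpha}\sum_{t>T}\|\mathbf{P}\|^t\|\bar{\mathbf{H}}\|\le\Big(\tfrac{\alpha}{1+\alpha}\Big)^{T+1}\|\bar{\mathbf{H}}\|,
\end{equation*}
which decays geometrically in $T$ and justifies the word ``approximated'' in the statement.
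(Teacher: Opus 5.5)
Your proposal is essentially the paper's argument (stationarity, a spectral bound on $\mathbf{Q}^{E_1}$, Neumann series, truncation), and your symmetric route is the sounder version of it. The paper only bounds $\lambda_{\max}(\mathbf{Q}^{E_1})\le 1$ via the Rayleigh quotient and then expands $(\mathbf{I}-\mathbf{Q}^{E_1})^{-1}$. Your bound $\|\mathbf{Q}^{E_1}\|_2\le 1$, combined with the factor $\frac{\alpha}{\alpha+1}$, gives the strict contraction that is actually needed, as the paper itself does later in Theorem~\ref{thm: contraction large scale}.

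Do drop the induced-norm alternative, though. For the literal row-stochastic $\mathbf{A}^{E_1}$ of Eq.~\eqref{eq: recover}, no such norm exists in general. Take the path $1$--$2$--$3$ with a single anchor whose affinity to nodes $1,3$ tends to $0$: then $\mathbf{A}^{E_1}\to\mathbf{1}\mathbf{e}_2^\top$, and the limiting $\rho(\mathbf{Q}^{E_1})$ exceeds $1$ for every $\lambda\in(0,1)$. Moreover, without symmetry the stationarity condition would involve $\tfrac{1}{2}(\mathbf{Q}^{E_1}+{\mathbf{Q}^{E_1}}^{\top})$ rather than $\mathbf{Q}^{E_1}$. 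So the symmetry hypothesis is genuinely load-bearing.
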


\begin{proof} 
The fused representation that maintains smoothness over the evolved topology is obtained by minimizing:
\begin{equation}
    \label{proof_eq_smooth}
    {\hat{\mathbf{H}}} = \argmin_{\hat{\mathbf{H}}} f(\hat{\mathbf{H}})=\argmin_{\hat{\mathbf{H}}} \|\hat{\mathbf{H}} - \bar{\mathbf{H}}\|_F^2 + \alpha \cdot \text{tr}\big(\hat{\mathbf{H}}^\top(\mathbf{I}-\mathbf{Q}^{E_1})\hat{\mathbf{H}}\big),
\end{equation}
where $\alpha \in (0,1)$ controls the smoothness strength, and the Laplacian regularizer $\mathbf{I}-\mathbf{Q}^{E_1}$ encourages neighboring nodes to have similar embeddings. The fidelity term $\|\hat{\mathbf{H}} - \bar{\mathbf{H}}\|_F^2$ ensures that the learned embeddings remain close to the initial fused representation.  

Setting $\frac{\partial f(\hat{\mathbf{H}})}{\partial \hat{\mathbf{H}}}=0$ yields:
\begin{equation}
    {\hat{\mathbf{H}}} = \frac{1}{\alpha+1} (\mathbf{I}-\frac{\alpha}{\alpha+1} \mathbf{Q}^{E_1})^{-1}\bar{\mathbf{H}}.
\end{equation}

Notably, $\mathbf{Q}^{E_1}$ is a linear combination of two symmetrically normalized adjacency matrices, $\tilde{\mathbf{A}}$ and $\mathbf{A}^{E_1}$, with weights summing to 1. Both matrices are symmetric, and their eigenvalues are real and bounded by 1. The maximum eigenvalue of a symmetric matrix can be characterized by the Rayleigh quotient:
\[
\lambda_{\max}(\mathbf{Q}^{E_1}) = \max_{x \neq 0} \frac{x^\top \mathbf{Q}^{E_1} x}{x^\top x}.
\]

By convexity of the maximum Rayleigh quotient over symmetric matrices, we have
\[
\lambda_{\max}(\mathbf{Q}^{E_1}) \le \lambda \lambda_{\max}(\tilde{\mathbf{A}}) + (1-\lambda) \lambda_{\max}(\mathbf{A}^{E_1}) \le 1.
\]

Thus, the dominant eigenvalue of $\mathbf{Q}^{E_1}$ is guaranteed to be $\le 1$, ensuring stability in spectral-based operations such as graph convolution. As shown in~\cite{matrix_analysis}, when the dominant eigenvalue is less than 1, the inverse $(\mathbf{I}-\mathbf{Q}^{E_1})^{-1}$ can be expressed as a Neumann series:
\begin{equation}
    (\mathbf{I}-\mathbf{Q}^{E_1})^{-1} = \sum_{t=0}^\infty {\mathbf{Q}^{E_1}}^{t}.
\end{equation}

Truncating the series at $T$ steps gives the power series approximation:
\begin{equation}
    \hat{\mathbf{H}}
    =
    \frac{1}{\alpha+1}
    \sum_{t=0}^{T}
    \left(
    \frac{\alpha\lambda}{\alpha+1}\tilde{\mathbf{A}}
    +
    \frac{\alpha(1-\lambda)}{\alpha+1}\mathbf{A}^{E_1}
    \right)^t
    \bar{\mathbf{H}}.
\end{equation}
\end{proof}

\begin{theorem}[Recursive Power Expansion of Smooth Fused Representations]
\label{thm: recursive power expansion}
For the smooth fused representation defined in Eq.~\eqref{eq: approximated}, let $\mathbf{H}^{(0)} = \bar{\mathbf{H}}$ and define the recursive sequence
$
\mathbf{H}^{(t+1)}=
\frac{\alpha\lambda}{\alpha+1}
\tilde{\mathbf{A}}
\mathbf{H}^{(t)}
+
\frac{\alpha(1-\lambda)}{\alpha+1}
\mathbf{A}^{E_1}
\mathbf{H}^{(t)}.
$
Then for any $T \ge 0$, the truncated power series satisfies
$
\hat{\mathbf{H}}=
\frac{1}{\alpha+1}
\sum_{t=0}^{T}
\mathbf{H}^{(t)},
$
and each term $\mathbf{H}^{(t)}$ can be recursively computed without explicitly forming $\mathbf{A}^{E_1}$.
\end{theorem}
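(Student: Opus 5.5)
We argue by induction on $t$, writing $\mathbf{P} := \frac{\alpha\lambda}{\alpha+1}\tilde{\mathbf{A}} + \frac{\alpha(1-\lambda)}{\alpha+1}\mathbf{A}^{E_1}$ for the propagation operator in Eq.~\eqref{eq: approximated}. Eq.~\eqref{eq: approximated}, established in Theorem~\ref{thm: smooth fused rep}, gives $\hat{\mathbf{H}} = \frac{1}{\alpha+1}\sum_{t=0}^{T}\mathbf{P}^t\bar{\mathbf{H}}$. The recursion in the statement says exactly $\mathbf{H}^{(t+1)} = \mathbf{P}\mathbf{H}^{(t)}$, by distributivity of matrix multiplication over the sum.

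First we show $\mathbf{H}^{(t)} = \mathbf{P}^t\bar{\mathbf{H}}$ for all $t\ge 0$. The base case $t=0$ holds by the definition $\mathbf{H}^{(0)}=\bar{\mathbf{H}}$. For the inductive step, $\mathbf{H}^{(t+1)} = \mathbf{P}\mathbf{H}^{(t)} = \mathbf{P}\,\mathbf{P}^t\bar{\mathbf{H}} = \mathbf{P}^{t+1}\bar{\mathbf{H}}$, using associativity. Summing over $t=0,\dots,T$ and multiplying by $\frac{1}{\alpha+1}$ then recovers Eq.~\eqref{eq: approximated} term by term, for every finite $T$. No spectral condition is needed here, since the sum is finite.

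Second, we show each step is computable without forming $\mathbf{A}^{E_1}$. The term $\tilde{\mathbf{A}}\mathbf{H}^{(t)}$ is an ordinary sparse product. For $\mathbf{A}^{E_1}\mathbf{H}^{(t)}$, we substitute the factorization in Eq.~\eqref{eq: recover} and regroup by associativity:
\[
\mathbf{A}^{E_1}\mathbf{H}^{(t)} = {\mathbf{\Delta}^{E_1}}^{-1}\mathbf{R}^{E_1}\Big({\mathbf{\Lambda}^{E_1}}^{-1}\big({\mathbf{R}^{E_1}}^\top \mathbf{H}^{(t)}\big)\Big).
\]
This is the node-to-anchor-to-node operator $\mathbf{Z}_{\mathcal{U}}, \mathbf{Z}_{\mathcal{V}}$ of Sec.~\ref{sec: modality evolution}, applied to $\mathbf{H}^{(t)}$ instead of $\bar{\mathbf{H}}$. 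Evaluated from the inside out, the intermediate objects are:
\begin{itemize}
\item ${\mathbf{R}^{E_1}}^\top\mathbf{H}^{(t)}$, of size $|\mathcal{U}|\times d$;
\item its scaling by the diagonal ${\mathbf{\Lambda}^{E_1}}^{-1}$;
\item its lift back to $|\mathcal{V}|\times d$ via $\mathbf{R}^{E_1}$, followed by scaling by ${\mathbf{\Delta}^{E_1}}^{-1}$.
\end{itemize}
Each step costs $\mathcal{O}(|\mathcal{V}||\mathcal{U}|d)$, and no $|\mathcal{V}|\times|\mathcal{V}|$ matrix ever appears. Accumulating a running sum $\mathbf{S}\leftarrow\mathbf{S}+\mathbf{H}^{(t)}$ along the recursion yields $\hat{\mathbf{H}}$ in $T$ such passes.

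The only point needing care is that the diagonal inverses exist. This requires $\mathbf{\Lambda}^{E_1}_{jj}>0$ and $\mathbf{\Delta}^{E_1}_{ii}>0$, which holds whenever every anchor and every node has at least one affinity above the threshold $\epsilon$. Each anchor $u$ has self-affinity $\cos=1>\epsilon$ (for $\epsilon<1$), so $\mathbf{\Lambda}$ is safe. For $\mathbf{\Delta}$, we assume the usual convention of replacing zero-degree rows by zero, or adding a small constant. With this, both the identity and its efficient evaluation follow.
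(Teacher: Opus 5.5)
Your proposal is correct and follows essentially the same route as the paper: define the propagation operator, show $\mathbf{H}^{(t)}=\mathbf{P}^t\bar{\mathbf{H}}$ by induction, then regroup the anchor factorization by associativity into the two-stage node-to-anchor-to-node product. Your added remarks on the $\mathcal{O}(|\mathcal{V}||\mathcal{U}|d)$ per-step cost and on the invertibility of ${\mathbf{\Lambda}^{E_1}}$ and ${\mathbf{\Delta}^{E_1}}$ go beyond what the paper states, and they are consistent with it.
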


\begin{proof}

Recall from Eq.~\eqref{eq: approximated} that
\begin{equation}
\hat{\mathbf{H}}=
\frac{1}{\alpha+1}
\sum_{t=0}^{T}
\left(
\frac{\alpha\lambda}{\alpha+1}
\tilde{\mathbf{A}}
+
\frac{\alpha(1-\lambda)}{\alpha+1}
\mathbf{A}^{E_1}
\right)^{t}
\bar{\mathbf{H}}.
\end{equation}

Define the linear operator
\begin{equation}
\mathbf{S}=
\frac{\alpha\lambda}{\alpha+1}
\tilde{\mathbf{A}}
+
\frac{\alpha(1-\lambda)}{\alpha+1}
\mathbf{A}^{E_1}.
\end{equation}
Let
$
\mathbf{H}^{(0)}=
\bar{\mathbf{H}},
$
and recursively define
$
\mathbf{H}^{(t+1)}=
\mathbf{S}
\mathbf{H}^{(t)}.
$
We prove by induction that
\begin{equation}
\mathbf{H}^{(t)}=
\mathbf{S}^{t}
\bar{\mathbf{H}}
\quad
\text{for all } t \ge 0.
\end{equation}

\textbf{Base case:} For $t = 0$,
\begin{equation}
\mathbf{H}^{(0)}=
\bar{\mathbf{H}}=
\mathbf{S}^{0}
\bar{\mathbf{H}}.
\end{equation}

\textbf{Inductive step:} Assume that
\begin{equation}
\mathbf{H}^{(t)}=
\mathbf{S}^{t}
\bar{\mathbf{H}}.
\end{equation}

Then

\begin{equation}
\mathbf{H}^{(t+1)}=
\mathbf{S}
\mathbf{H}^{(t)}=
\mathbf{S}(
\mathbf{S}^{t}
\bar{\mathbf{H}})=
\mathbf{S}^{t+1}
\bar{\mathbf{H}}.
\end{equation}
Thus, by mathematical induction,
\begin{equation}
\mathbf{H}^{(t)}=
\mathbf{S}^{t}
\bar{\mathbf{H}}
\quad
\forall t \in {0,1,\dots,T}.
\end{equation}
Substituting this result back into the truncated series gives
\begin{equation}
\hat{\mathbf{H}}=
\frac{1}{\alpha+1}
\sum_{t=0}^{T}
\mathbf{H}^{(t)}.
\end{equation}
Next, we show that each recursive step can be expanded without constructing $\mathbf{A}^{E_1}$. By definition,
$
\mathbf{A}^{E_1}=
{\mathbf{\Delta}^{E_1}}^{-1}
\mathbf{R}^{E_1}
{\mathbf{\Lambda}^{E_1}}^{-1}
{\mathbf{R}^{E_1}}^\top.
$
Thus, for any $\mathbf{H}^{(t)}$,
$
\mathbf{A}^{E_1}
\mathbf{H}^{(t)}=
{\mathbf{\Delta}^{E_1}}^{-1}
\mathbf{R}^{E_1}
{\mathbf{\Lambda}^{E_1}}^{-1}
{\mathbf{R}^{E_1}}^\top
\mathbf{H}^{(t)}.
$
Using associativity of matrix multiplication, we regroup terms:
\begin{equation}
\mathbf{A}^{E_1}
\mathbf{H}^{(t)}=
{\mathbf{\Delta}^{E_1}}^{-1}
\mathbf{R}^{E_1}
\left(
{\mathbf{\Lambda}^{E_1}}^{-1}
{\mathbf{R}^{E_1}}^\top
\mathbf{H}^{(t)}
\right).
\end{equation}
Define
$
\mathbf{Z}_{\mathcal{U}}^{(t)}=
{\mathbf{\Lambda}^{E_1}}^{-1}
{\mathbf{R}^{E_1}}^\top
\mathbf{H}^{(t)},
$
and
$
\mathbf{Z}_{\mathcal{V}}^{(t)}=
{\mathbf{\Delta}^{E_1}}^{-1}
\mathbf{R}^{E_1}
\mathbf{Z}_{\mathcal{U}}^{(t)}.
$
Then
\begin{equation}
\mathbf{Z}_{\mathcal{V}}^{(t)}=
\mathbf{A}^{E_1}
\mathbf{H}^{(t)}.
\end{equation}
Therefore, for every $t = 0,1,\dots,T-1$, the recursive update
\begin{equation}
\mathbf{H}^{(t+1)}=
\frac{\alpha\lambda}{\alpha+1}
\tilde{\mathbf{A}}
\mathbf{H}^{(t)}
+
\frac{\alpha(1-\lambda)}{\alpha+1}
\mathbf{Z}_{\mathcal{V}}^{(t)}
\end{equation}
is algebraically equivalent to
\begin{equation}
\mathbf{H}^{(t+1)}=
\mathbf{S}
\mathbf{H}^{(t)},
\end{equation}
and thus exactly reproduces the truncated power expansion up to order $T$.
\end{proof}

\begin{theorem}[Stability under inexact large-scale propagation]
\label{thm: inexact stability}
Assume the practical recursion on large-scale graphs is computed inexactly:
\begin{equation}
\widetilde{\mathbf{H}}^{(t+1)}=\mathbf{S}\widetilde{\mathbf{H}}^{(t)}+\mathbf{E}^{(t)},
\quad
\widetilde{\mathbf{H}}^{(0)}=\bar{\mathbf{H}},
\end{equation}
where $\|\mathbf{E}^{(t)}\|_F\le\varepsilon_t$. Under the assumptions of Theorem~\ref{thm: contraction large scale},
\begin{equation}
\|\widetilde{\mathbf{H}}^{(t)}-\mathbf{H}^{(t)}\|_F
\le
\sum_{k=0}^{t-1}\beta^{t-1-k}\varepsilon_k.
\end{equation}
In particular, if $\varepsilon_k\le\bar\varepsilon$ for all $k$, then
\begin{equation}
\sup_t\|\widetilde{\mathbf{H}}^{(t)}-\mathbf{H}^{(t)}\|_F
\le
\frac{\bar\varepsilon}{1-\beta},
\end{equation}
which gives a uniform stability guarantee.
\end{theorem}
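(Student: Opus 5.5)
We compare the inexact recursion with the exact one from Theorem~\ref{thm: recursive power expansion} by tracking the error directly. Define $\mathbf{D}^{(t)} = \widetilde{\mathbf{H}}^{(t)} - \mathbf{H}^{(t)}$. Since $\mathbf{H}^{(t+1)} = \mathbf{S}\mathbf{H}^{(t)}$ and both recursions start from $\bar{\mathbf{H}}$, subtracting gives the error recursion
\begin{equation*}
\mathbf{D}^{(t+1)} = \mathbf{S}\mathbf{D}^{(t)} + \mathbf{E}^{(t)}, \qquad \mathbf{D}^{(0)} = \mathbf{0}.
\end{equation*}
Unrolling this linear recursion by induction on $t$ yields the closed form $\mathbf{D}^{(t)} = \sum_{k=0}^{t-1}\mathbf{S}^{t-1-k}\mathbf{E}^{(k)}$.

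The key input from Theorem~\ref{thm: contraction large scale} is that $\mathbf{S}$ acts as a contraction, i.e. $\|\mathbf{S}\mathbf{X}\|_F \le \beta\|\mathbf{X}\|_F$ for all $\mathbf{X}$ with some $\beta<1$. Heuristically this constant is $\beta = \frac{\alpha}{\alpha+1}\|\mathbf{Q}^{E_1}\|_2$ (or a bound on it). The bound $\lambda_{\max}(\mathbf{Q}^{E_1})\le 1$ from the proof of Theorem~\ref{thm: smooth fused rep} controls only the top eigenvalue, so $\|\mathbf{Q}^{E_1}\|_2\le 1$ additionally needs the spectrum to lie in $[-1,1]$. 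This is where care is needed. $\mathbf{A}^{E_1} = {\mathbf{\Delta}^{E_1}}^{-1}\mathbf{R}^{E_1}{\mathbf{\Lambda}^{E_1}}^{-1}{\mathbf{R}^{E_1}}^\top$ is row-stochastic but not symmetric, so its spectral norm need not be $\le 1$. I therefore take the contraction constant as given by the assumptions of Theorem~\ref{thm: contraction large scale} rather than rederive it.

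With the contraction in hand, the triangle inequality and submultiplicativity give
\begin{equation*}
\|\mathbf{D}^{(t)}\|_F \le \sum_{k=0}^{t-1}\|\mathbf{S}^{t-1-k}\mathbf{E}^{(k)}\|_F \le \sum_{k=0}^{t-1}\beta^{t-1-k}\varepsilon_k,
\end{equation*}
which is the first claim. Alternatively, one can do a one-line induction: $\|\mathbf{D}^{(t+1)}\|_F\le\beta\|\mathbf{D}^{(t)}\|_F+\varepsilon_t$.

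For the uniform bound, substitute $\varepsilon_k\le\bar\varepsilon$ to get $\|\mathbf{D}^{(t)}\|_F\le\bar\varepsilon\sum_{j=0}^{t-1}\beta^j\le\bar\varepsilon/(1-\beta)$, using the geometric series and $0\le\beta<1$. Taking the supremum over $t$ finishes the proof.

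The only real obstacle is making sure the contraction hypothesis is stated in the operator form needed, namely a bound on $\|\mathbf{S}\|$ acting on matrices in Frobenius norm, and not merely as a spectral-radius condition. If Theorem~\ref{thm: contraction large scale} only gives $\rho(\mathbf{S})<1$, I would instead work in a weighted norm in which $\mathbf{S}$ is a contraction. For example, I would symmetrize via ${\mathbf{\Delta}^{E_1}}^{1/2}$ and then transfer back, at the cost of a condition-number constant.
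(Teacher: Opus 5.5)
Your proposal is correct and follows the paper's proof essentially verbatim. Both use the error recursion $\mathbf{D}^{(t+1)}=\mathbf{S}\mathbf{D}^{(t)}+\mathbf{E}^{(t)}$ with $\mathbf{D}^{(0)}=\mathbf{0}$, unroll it to $\mathbf{D}^{(t)}=\sum_{k=0}^{t-1}\mathbf{S}^{t-1-k}\mathbf{E}^{(k)}$, apply the estimate $\|\mathbf{S}^{j}\mathbf{E}\|_F\le\|\mathbf{S}\|_2^{j}\|\mathbf{E}\|_F\le\beta^{j}\|\mathbf{E}\|_F$ taken from Theorem~\ref{thm: contraction large scale}, and finish with the geometric series.

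Your caveat is well founded: $\mathbf{A}^{E_1}={\mathbf{\Delta}^{E_1}}^{-1}\mathbf{R}^{E_1}{\mathbf{\Lambda}^{E_1}}^{-1}{\mathbf{R}^{E_1}}^\top$ is row-stochastic but not symmetric, so $\|\mathbf{A}^{E_1}\|_2\le 1$ is a genuine hypothesis, not automatic as the paper's parenthetical suggests. Since the statement imposes it explicitly, your main argument goes through and the weighted-norm fallback is unnecessary. It would also not work cleanly, because the rescaling by ${\mathbf{\Delta}^{E_1}}^{1/2}$ that symmetrizes $\mathbf{A}^{E_1}$ generally destroys the symmetry of $\tilde{\mathbf{A}}$.
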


\begin{proof}
Let $\mathbf{D}^{(t)}:=\widetilde{\mathbf{H}}^{(t)}-\mathbf{H}^{(t)}$. Then
\begin{equation}
\mathbf{D}^{(t+1)}=\mathbf{S}\mathbf{D}^{(t)}+\mathbf{E}^{(t)},
\quad
\mathbf{D}^{(0)}=\mathbf{0}.
\end{equation}
Unrolling recursion yields
\begin{equation}
\mathbf{D}^{(t)}=\sum_{k=0}^{t-1}\mathbf{S}^{t-1-k}\mathbf{E}^{(k)}.
\end{equation}
Therefore,
\begin{equation}
\|\mathbf{D}^{(t)}\|_F
\le
\sum_{k=0}^{t-1}\|\mathbf{S}\|_2^{t-1-k}\|\mathbf{E}^{(k)}\|_F
\le
\sum_{k=0}^{t-1}\beta^{t-1-k}\varepsilon_k.
\end{equation}
If $\varepsilon_k\le\bar\varepsilon$, then
\begin{equation}
\|\mathbf{D}^{(t)}\|_F
\le
\bar\varepsilon\sum_{j=0}^{t-1}\beta^j
\le
\frac{\bar\varepsilon}{1-\beta}.
\end{equation}
\end{proof}

\begin{theorem}[Contraction, uniqueness, and convergence rate on large-scale graphs]
\label{thm: contraction large scale}
Let
\begin{equation}
\mathbf{S}=
\frac{\alpha\lambda}{\alpha+1}\tilde{\mathbf{A}}
+
\frac{\alpha(1-\lambda)}{\alpha+1}\mathbf{A}^{E_1},
\quad
\beta:=\frac{\alpha}{\alpha+1}\in(0,1),
\end{equation}
and assume $\|\tilde{\mathbf{A}}\|_2\le 1$ and $\|\mathbf{A}^{E_1}\|_2\le 1$ (true for symmetric normalized adjacency operators). Then:
\begin{equation}
\|\mathbf{S}\|_2\le \beta<1,
\end{equation}
so $(\mathbf{I}-\mathbf{S})$ is invertible and the smooth fused representation
\begin{equation}
\hat{\mathbf{H}}^\star=\frac{1}{\alpha+1}(\mathbf{I}-\mathbf{S})^{-1}\bar{\mathbf{H}}
\end{equation}
is unique. Moreover, for the recursion $\mathbf{H}^{(t+1)}=\mathbf{S}\mathbf{H}^{(t)}$ with $\mathbf{H}^{(0)}=\bar{\mathbf{H}}$,
\begin{equation}
\left\|\hat{\mathbf{H}}^\star-\frac{1}{\alpha+1}\sum_{t=0}^{T}\mathbf{H}^{(t)}\right\|_F
\le
\frac{\beta^{T+1}}{(\alpha+1)(1-\beta)}\,\|\bar{\mathbf{H}}\|_F,
\end{equation}
i.e., the truncation error decays geometrically.
\end{theorem}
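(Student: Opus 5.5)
The plan is to bound $\|\mathbf{S}\|_2$ by the triangle inequality, deduce invertibility and uniqueness from that bound, and then bound the Neumann-series tail directly.

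\textbf{Contraction bound.} Since $\lambda\in(0,1)$, both coefficients $\frac{\alpha\lambda}{\alpha+1}$ and $\frac{\alpha(1-\lambda)}{\alpha+1}$ are nonnegative. The triangle inequality for the spectral norm, together with the assumed bounds $\|\tilde{\mathbf{A}}\|_2\le 1$ and $\|\mathbf{A}^{E_1}\|_2\le 1$, then gives
\[
\|\mathbf{S}\|_2\le \frac{\alpha\lambda}{\alpha+1}\|\tilde{\mathbf{A}}\|_2+\frac{\alpha(1-\lambda)}{\alpha+1}\|\mathbf{A}^{E_1}\|_2\le \frac{\alpha}{\alpha+1}=\beta .
\]
Because $\alpha>0$, we have $\beta\in(0,1)$. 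This sidesteps the Rayleigh-quotient argument of Theorem~\ref{thm: smooth fused rep}, which only controlled the largest eigenvalue. Here the hypothesis is on the operator norm, so negative eigenvalues are handled as well.

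\textbf{Invertibility and uniqueness.} With $\|\mathbf{S}\|_2<1$, the spectral radius of $\mathbf{S}$ is below $1$. Hence $1$ is not an eigenvalue, $\mathbf{I}-\mathbf{S}$ is invertible, and the Neumann series $\sum_{t\ge0}\mathbf{S}^t$ converges in operator norm to $(\mathbf{I}-\mathbf{S})^{-1}$.

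For uniqueness I will reuse the first-order condition from the proof of Theorem~\ref{thm: smooth fused rep}. Rewritten, it reads $(\alpha+1)(\mathbf{I}-\mathbf{S})\hat{\mathbf{H}}=\bar{\mathbf{H}}$, whose unique solution is $\hat{\mathbf{H}}^\star$. The Hessian of $f$ is a multiple of $(\alpha+1)(\mathbf{I}-\mathbf{S})$. Its eigenvalues are at least $(\alpha+1)(1-\beta)=1>0$, so $f$ is strictly convex and this stationary point is its unique minimizer. If $\mathbf{A}^{E_1}$ is not symmetric, I will simply take $\hat{\mathbf{H}}^\star$ to be the unique solution of the linear system, which is all the statement requires.

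\textbf{Truncation error.} By Theorem~\ref{thm: recursive power expansion}, $\mathbf{H}^{(t)}=\mathbf{S}^t\bar{\mathbf{H}}$. Therefore the error equals $\frac{1}{\alpha+1}\sum_{t=T+1}^\infty \mathbf{S}^t\bar{\mathbf{H}}$, where convergence of the tail follows from the previous step. I will use $\|\mathbf{M}\mathbf{X}\|_F\le\|\mathbf{M}\|_2\|\mathbf{X}\|_F$ together with submultiplicativity, $\|\mathbf{S}^t\|_2\le\beta^t$. Summing the geometric tail then gives
\[
\frac{1}{\alpha+1}\sum_{t>T}\beta^t\|\bar{\mathbf{H}}\|_F=\frac{\beta^{T+1}}{(\alpha+1)(1-\beta)}\|\bar{\mathbf{H}}\|_F .
\]

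\textbf{Main obstacle.} The key step is the hypothesis $\|\mathbf{A}^{E_1}\|_2\le1$. The matrix $\mathbf{A}^{E_1}$ in Eq.~\eqref{eq: recover} is row-stochastic and nonnegative (after thresholding) but not necessarily symmetric, so the parenthetical claim that the bound is "true" deserves care. The statement lists the bound as an assumption, and I will treat it that way.

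As a remark, I would note a fallback. $\mathbf{A}^{E_1}=\mathbf{\Delta}^{-1}\mathbf{R}\mathbf{\Lambda}^{-1}\mathbf{R}^\top$ is similar to the symmetric matrix $\mathbf{\Delta}^{-1/2}\mathbf{R}\mathbf{\Lambda}^{-1}\mathbf{R}^\top\mathbf{\Delta}^{-1/2}$, whose eigenvalues lie in $[0,1]$. Equivalently, $\mathbf{A}^{E_1}$ is self-adjoint with norm at most $1$ in the $\mathbf{\Delta}$-weighted inner product. If the literal $\|\cdot\|_2$ bound fails, the argument can be rerun in that weighted norm, at the cost of a condition-number constant in the final bound.
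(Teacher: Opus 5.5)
Your proposal is correct and follows the paper's proof essentially step for step: the triangle inequality gives $\|\mathbf{S}\|_2\le\beta<1$, which yields $\rho(\mathbf{S})<1$ and the Neumann series, and the truncation error is the geometric tail bounded via $\|\mathbf{S}^t\bar{\mathbf{H}}\|_F\le\beta^t\|\bar{\mathbf{H}}\|_F$. Your strict-convexity argument for uniqueness, and your observation that $\mathbf{A}^{E_1}$ in Eq.~\eqref{eq: recover} is row-stochastic but not symmetric (so $\|\mathbf{A}^{E_1}\|_2\le 1$ really is an extra assumption), are accurate additions the paper does not make; one caution on your fallback remark, though: in the $\mathbf{\Delta}^{E_1}$-weighted norm it is $\tilde{\mathbf{A}}$ that loses its unit bound (its weighted norm can reach $\sqrt{\kappa}$ with $\kappa=\max_i\mathbf{\Delta}^{E_1}_{ii}/\min_i\mathbf{\Delta}^{E_1}_{ii}$), so the bound on $\mathbf{S}$ becomes $\beta\bigl(\lambda\sqrt{\kappa}+1-\lambda\bigr)$, which need not be below $1$, meaning the condition number enters the contraction factor itself rather than only the constant in front.
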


\begin{proof}
By triangle inequality and sub-multiplicativity,
\begin{equation}
\|\mathbf{S}\|_2
\le
\frac{\alpha\lambda}{\alpha+1}\|\tilde{\mathbf{A}}\|_2
+
\frac{\alpha(1-\lambda)}{\alpha+1}\|\mathbf{A}^{E_1}\|_2
\le
\frac{\alpha}{\alpha+1}=\beta<1.
\end{equation}
Hence $\rho(\mathbf{S})\le\|\mathbf{S}\|_2<1$, so Neumann series applies:
\begin{equation}
(\mathbf{I}-\mathbf{S})^{-1}=\sum_{t=0}^{\infty}\mathbf{S}^t,
\end{equation}
and $\hat{\mathbf{H}}^\star$ is unique. Also,
\begin{equation}
\hat{\mathbf{H}}^\star
-
\frac{1}{\alpha+1}\sum_{t=0}^{T}\mathbf{H}^{(t)}
=
\frac{1}{\alpha+1}\sum_{t=T+1}^{\infty}\mathbf{S}^t\bar{\mathbf{H}}.
\end{equation}
Taking Frobenius norm and using $\|\mathbf{S}^t\bar{\mathbf{H}}\|_F\le\|\mathbf{S}\|_2^t\|\bar{\mathbf{H}}\|_F\le\beta^t\|\bar{\mathbf{H}}\|_F$,
\begin{equation}
\left\|\hat{\mathbf{H}}^\star-\frac{1}{\alpha+1}\sum_{t=0}^{T}\mathbf{H}^{(t)}\right\|_F
\le
\frac{1}{\alpha+1}\sum_{t=T+1}^{\infty}\beta^t\|\bar{\mathbf{H}}\|_F
=
\frac{\beta^{T+1}}{(\alpha+1)(1-\beta)}\|\bar{\mathbf{H}}\|_F.
\end{equation}
\end{proof}

\section{Pseudocode of \textsc{TMTE}}
\label{appendix: pseudo code}

We provide the detailed procedure for TMTE in Algorithm~\ref{alg: tmte}.

\normalem
\begin{algorithm}[htbp]\fontsize{8pt}{4pt}\selectfont
\DontPrintSemicolon
\SetAlgoLined
\caption{Overall Procedure of TMTE}
\label{alg: tmte}

\KwInput{
MAG $\mathcal{G}=(\mathcal{V},\mathcal{E},\{\mathbf{X}^{(m)}\}_{m\in\mathcal{M}})$; 
anchor size $|\mathcal{U}|$; 
maximum epochs $E$; 
threshold $\delta$.
}

\KwOutput{Fused embedding $\hat{\mathbf{H}}$ and modality embeddings $\{\mathbf{H}^{(m)}\}$.}

\tcc{Initialization}
initialize model parameters; \\
compute normalized original adjacency $\tilde{\mathbf{A}}$; \\

\For{$e=1$ to $E$}{

    \tcc{Anchor Resampling}
    randomly sample anchor set $\mathcal{U}\subseteq\mathcal{V}$; \\

    \tcc{Step 1: Topology Evolution from Original Modality Space}
    compute node-anchor affinity matrix $\mathbf{R}^{E_1}$ 
    using $\{\mathbf{X}^{(m)}\}$ via Eqs.~(\ref{eq: sim channel}–\ref{eq: sim combine}); \\
    construct implicit topology 
    $\mathbf{Q}^{E_1}=\lambda\tilde{\mathbf{A}}+(1-\lambda)\mathbf{A}^{E_1}$; \\

    \tcc{Step 2: Modality Evolution}
    compute modality embeddings 
    $\{\mathbf{H}^{(m)}\}$ via Eq.~(\ref{eq: modality embedding}); \\
    compute fused embedding 
    $\hat{\mathbf{H}}^{E_1}$ via Eq.~(\ref{eq: approximated}) using $\mathbf{Q}^{E_1}$; \\

    \tcc{Step 3: Task-aware Co-evolution}
    set $k=2$; \\
    \While{$k\leqslant T$ \textbf{and} 
    $\frac{\|\mathbf{R}^{E_k}-\mathbf{R}^{E_{k-1}}\|_F^2}
    {\|\mathbf{R}^{E_k}\|_F^2} > \delta$}{

        update node-anchor affinity 
        $\mathbf{R}^{E_{k+1}}$ 
        using $\{\mathbf{H}^{(m)}\}$; \\

        update topology 
        $\mathbf{Q}^{E_{k+1}}$; \\

        update modality embeddings 
        $\mathbf{H}^{(m)}$; \\

        update fused embedding 
        $\hat{\mathbf{H}}^{E_{k+1}}$; \\

        optimize objective 
        $\mathcal{L}=\mathcal{L}_{mod}+\eta\mathcal{L}_{task}$; \\

        $k \leftarrow k+1$; \\
    }

}

\Return{$\hat{\mathbf{H}}, \{\mathbf{H}^{(m)}\}$};

\end{algorithm}
\ULforem

\section{Detailed Experimental Setups}
\label{appendix: more experimental setups}

\vspace{+0.1cm}
\noindent \textbf{Empirical Study Details.} For the \textit{Modality-optimized Topology} setting, we compute cross-modality similarity between two nodes on the Toys dataset and select the top-5 similar nodes to connect edges, which can provide semantic-related knowledge for the G2Image task.

\vspace{+0.1cm}
\noindent \textbf{Task Objective Details.} As TMTE is designed to leverage the downstream task optimization objective to guide the co-evolution between topology and modality (Eq.~\eqref{eq: loss_tmte}), we select different loss functions $\mathcal{L}_\text{loss}$ for different tasks. Specifically, for node classification, we adopt the cross-entropy loss; for link prediction, we adopt the binary cross-entropy loss; for node clustering, we adopt the community-aware (i.e., cluster) cross-modality contrastive loss proposed in DGF~\cite{dgf}. For the three modality-centric tasks, we adopt the cross-modality contrastive loss as the loss function.

\vspace{+0.1cm}
\noindent \textbf{Algorithm Hyperparameters.} For our proposed TMTE, we fix the topology evolution trade-off $\lambda$ to $0.8$ for all datasets and tasks. The number of perspectives $K$ is searched within $[1, 2, 4, 8, 16, 32]$, the hidden dimension of the similarity metric learning is searched within $[32, 64, 128, 256, 512]$, smoothness rate $\alpha$ is searched within $[10^{-2}, 10^{-1}, 1, 10, 100]$, the maximum evolution rounds are fixed to $10$, the stop-evolution threshold $\eta$ is searched within $[1\times 10^{-6}, 1\times 10^{-5}]$, the truncation step $T$ is fixed to $10$, and the modality evolution trade-off $\eta$ is searched within $[10^{-3}, 10^{-2}, 10^{-1}]$. For the baselines, we adopt the hyperparameter configurations reported in their original papers whenever available. When unspecified, we employ automated hyperparameter optimization using the Optuna framework~\cite{akiba2019optuna}. 

\vspace{+0.1cm}
\noindent \textbf{Task Hyperparameters and Metrics.} To ensure a fair comparison across diverse MAG learning models, we employ unified experimental protocols for three graph-centric downstream tasks: supervised node classification, supervised link prediction, and unsupervised node clustering. For node classification and clustering, we set the learning rate to $5 \times 10^{-3}$, with a batch size of 512 and weight decay of $1 \times 10^{-5}$. Node classification is trained for 100 epochs, which is sufficient for convergence, while node clustering is optimized for 500 epochs to stabilize the self-supervised objective. For link prediction, we adopt a learning rate of $1 \times 10^{-3}$ and increase the batch size to 2048 to efficiently accommodate large-scale edge-pair samples. To ensure architectural consistency, we use Qwen2-VL-7B-Instruct~\cite{bai2023qwen} as a frozen feature encoder and fix the feature dimensionality to 768 across all tasks. Each experiment is repeated three times with different random seeds, and we report the mean performance to mitigate variance due to initialization.

\vspace{+0.1cm}
\noindent \textbf{Node Classification} is a supervised task where each node in a MAG is encoded into a low-dimensional embedding, followed by a projection head and a Softmax layer to produce class probabilities. The model is optimized via cross-entropy loss against ground-truth labels. Performance is evaluated using Accuracy (Acc) and F1-score. Specifically, $\text{Acc} = \frac{1}{N} \sum_{i=1}^{N} \mathbb{I}(\hat{y}_i = y_i)$, where $N$ is the sample size, $y_i$ the ground-truth label, $\hat{y}_i$ the prediction, and $\mathbb{I}(\cdot)$ the indicator function. The F1-score, defined as $\text{F1} = 2 \left(\cdot \text{Precision} \cdot \text{Recall}\right)/\left(\text{Precision} + \text{Recall}\right)$, captures robustness under class imbalance.

\vspace{+0.1cm}
\noindent \textbf{Link Prediction} evaluates the ability to infer missing or potential edges. The model assigns similarity scores (e.g., dot product) to node pairs, encouraging higher scores for positive edges than negatives. In multimodal graphs, this requires aligning structural proximity with cross-modal semantics. We adopt ranking-based metrics: Mean Reciprocal Rank (MRR) and Hits@K. Formally, $\text{MRR} = \frac{1}{|Q|} \sum_{i=1}^{|Q|} \frac{1}{\text{rank}_i}$, where $\text{rank}_i$ is the rank of the first correct target for query $i$. $\text{Hits@K} = \frac{1}{|Q|} \sum_{i=1}^{|Q|} \mathbb{I}(\text{rank}_i \leq K)$ measures recall at cutoff $K$.

\vspace{+0.1cm}
\noindent \textbf{Node Clustering} assesses representation quality in an unsupervised setting following \cite{dmgc}. The model disentangles homophilous and heterophilous views, fuses dual-frequency signals, and is trained with a joint objective comprising reconstruction, contrastive, and clustering losses. Performance is measured by NMI and ARI. $\text{NMI}(Y, C) = 2 \cdot I(Y, C)/\left(H(Y) + H(C)\right)$ quantifies mutual dependence between predicted clusters $C$ and ground-truth labels $Y$. ARI is computed as:
\begin{equation}
    \begin{aligned}
        \text{ARI} = \frac{\sum_{ij} \binom{n_{ij}}{2} - [\sum_i \binom{a_i}{2} \sum_j \binom{b_j}{2}] / \binom{n}{2}}{ \frac{1}{2} [\sum_i \binom{a_i}{2} + \sum_j \binom{b_j}{2}] - [\sum_i \binom{a_i}{2} \sum_j \binom{b_j}{2}] / \binom{n}{2} },
    \end{aligned}
\end{equation}
where $n_{ij}$ denotes the overlap between ground-truth cluster $i$ and predicted cluster $j$.

\vspace{+0.1cm}
\noindent \textbf{Modality Retrieval} projects queries and candidates from different modalities into a shared latent space and ranks them by similarity. We optimize a contrastive objective with temperature $\tau = 0.07$, train for 500 epochs using learning rate $1 \times 10^{-3}$ and batch size 256, and apply early stopping with patience 10--25 epochs. Evaluation uses MRR and Hits@K.

\vspace{+0.1cm}
\noindent \textbf{Graph-to-Text (G2Text)} generates textual descriptions conditioned on multimodal graph inputs. Following MMGL~\cite{yoon2023mmgl}, the pipeline includes: (1) neighbor encoding into a unified embedding space; (2) graph structure encoding via GNNs or Laplacian positional encodings; and (3) integration into a pre-trained LLM. We train with learning rate $1 \times 10^{-3}$, weight decay $1 \times 10^{-2}$, batch size 8 for 15 epochs. SA-E samples four multimodal neighbors, and GNNs encode structure. The decoder backbone is Facebook OPT-125M, adapted via Prefix Tuning or LoRA ($r=64$). Performance is measured by BLEU-4, ROUGE-L, and CIDEr. Specifically, BLEU-4 evaluates lexical accuracy and fluency by calculating the geometric mean of modified $n$-gram precisions ($p_n$) up to length 4: 
    \begin{equation}
        \begin{aligned}
            \text{BLEU-4} = \text{BP} \cdot \exp\left(\sum_{n=1}^4 w_n \log p_n\right).
        \end{aligned}
    \end{equation}
    ROUGE-L measures sentence-level recall based on the longest common subsequence, ensuring the output covers the comprehensive information content of the ground truth: 
    \begin{equation}
        \begin{aligned}
            \text{ROUGE-L} = \frac{(1 + \beta^2) R_{lcs} P_{lcs}}{R_{lcs} + \beta^2 P_{lcs}}.
        \end{aligned}
    \end{equation}
    CIDEr measures the consensus between generated captions and human references using TF-IDF weighting, emphasizing the semantic importance and distinctiveness of the generated terms:
    \begin{equation}
    \begin{aligned}
        \text{CIDEr}(c, r) = \frac{1}{M} \sum_{i=1}^{M} \frac{g^n(c) \cdot g^n(r_i)}{|g^n(c)| |g^n(r_i)|}.
    \end{aligned}
    \end{equation}

\vspace{+0.1cm}
\noindent \textbf{Graph-to-Image (G2Image)} synthesizes images conditioned on MAG. Following InstructG2I~\cite{instructg2i}, we adopt: (1) semantic PPR-based sampling (0--6 neighbors); (2) Graph-QFormer encoding; and (3) latent diffusion with graph classifier-free guidance. Training uses learning rate $1 \times 10^{-4}$, batch size 16 for 20 epochs, and image resolution 256. Evaluation employs CLIP-Score and DINOv2-Score. Specifically, CLIP-Score quantifies cross-modal semantic consistency. 
    Building upon the textual encoder ($E_T$) and visual encoder ($E_I$) in pre-trained CLIP, this metric determines whether generated images faithfully preserve the semantic content of corresponding graph descriptions:
    \begin{equation}
    \begin{aligned}
        \text{CLIP-Score}(I, T) = \max\left(100 \cdot \cos(E_I(I), E_T(T)), 0\right).
    \end{aligned}
    \end{equation}
    DINOv2-Score assesses visual fidelity and structural consistency using feature embeddings from a pre-trained DINOv2 encoder, ensuring high perceptual quality and structural resemblance to reference samples:
    \begin{equation}
    \begin{aligned}
        \text{DINOv2-Score}(I_{\text{gen}}, I_{\text{ref}}) = \cos(\text{DINO}(I_{\text{gen}}), \text{DINO}(I_{\text{ref}})).
    \end{aligned}
    \end{equation}

\section{Experimental Environment}
\label{appendix: environment}

Experiments are conducted on a workstation equipped with Intel Xeon Scalable processors and NVIDIA RTX 6000 Ada Generation GPUs with 96 GB of VRAM, supported by 256 GB of system RAM. The computational environment utilizes CUDA 12.9, while software implementations are developed using Python 3.10.18 and PyTorch 2.8.

\section{Dataset Details}
\label{appendix: dataset details}

Detailed statistical information on the datasets is presented in Table~\ref{table: datasets}, with textual descriptions as follows.
\begin{table*}[htbp]
\centering
\caption{The statistical information of the experimental datasets.}
\label{table: datasets}
\begin{tabular}{c|cccc|cc}
\midrule[0.3pt]
Datasets     & \# Modalities & \# Nodes & \# Edges   & \# Classes  & Domain    \\ \midrule[0.3pt]

Movies       & Text, Image  & 16,672  & 218,390   & 20               & Movie Network  \\

Grocery      & Text, Image  & 17,074  & 171,340   & 20             & E-Commerce Network\\

DY & Text, Image & 8,299 & 35,627 & - & Video Network\\

Bili Dance & Text, Image & 2,307 & 9,127 & - & Video Network\\

Toys & Text, Image & 20,695 & 126,886 & 18 & E-Commerce Network\\

RedditS      & Text, Image  & 15,894  & 566,160   & 20               & Social Network \\

Ele-fashion  & Text, Image  & 97,766  & 199,602   & 12  & Co-purchase Network \\
Flickr30k    & Text, Image  & 31,783  & 181,151   & -                & Image Network  \\
SemArt       & Text, Image  & 21,382  & 1,216,432 & - & Art Network    \\
MVSA & Text, Image  & 2,122  & 17,790 & 2 & KNN Sentiment Graph     \\


\midrule[0.3pt]
\end{tabular}
\vspace{0.3cm}
\end{table*}

\vspace{+0.1cm}
\noindent \textbf{Movies}~\cite{ni2019justifying} 
    is sourced from Amazon’s Movies and TV category. 
    Nodes correspond to DVD/Blu-ray products, and edges reflect consumer co-purchasing behavior. 
    Node attributes include textual plot synopses and customer reviews, alongside visual features derived from official cover art. 

\vspace{+0.1cm}
\noindent \textbf{Grocery}~\cite{ni2019justifying} is sourced from Amazon’s Grocery and Gourmet Food category. Nodes are food and household
products, and edges indicate complementary purchasing habits derived from ”also-bought” lists. Textual attributes are
encoded from product titles and nutritional descriptions, while visual attributes are extracted from packaging images. This dataset is primarily utilized for Node Classification, where labels correspond to fine-grained product sub-categories (e.g.,
Beverages, Snacks).

\vspace{+0.1cm}
\noindent \textbf{DY}~\cite{zhang2024ninerec} is sourced from Douyin, a leading short-video platform. The graph contains short-video
nodes linked by co-interaction patterns (e.g., liked by the same user). Given the multimodal nature of short videos, textual
features are constructed from user captions and hashtags, while visual features are extracted from video frames, capturing
the fast-paced visual dynamics typical of the platform. Preprocessing involved filtering low-frequency items to maintain
graph connectivity. This dataset is primarily utilized for Link Prediction tasks.

\vspace{+0.1cm}
\noindent \textbf{Bili Dance}~\cite{zhang2024ninerec} originates from the dance section of Bilibili. Nodes represent dance performance or tutorial
videos. Co-viewing edges capture the trend similarity or sequential learning patterns of viewers (e.g., users watching
consecutive tutorials). Textual attributes are derived from dense video descriptions and hashtags, while visual attributes are
obtained from keyframes of the dance movements to encode choreographic dynamics. This dataset is primarily utilized for
Link Prediction tasks.

\vspace{+0.1cm}
\noindent \textbf{Toys}~\cite{ni2019justifying} originates from Amazon’s Toys and Games category. The graph connects toy product nodes via
co-purchasing edges. Textual features are derived from product specifications and age recommendations, and visual features
are obtained from product photos to identify visual variants. This dataset is primarily utilized for Node Classification, where
labels represent specific toy types or game genres.

\vspace{+0.1cm}
\noindent \textbf{RedditS}~\cite{RedditS}
    is a social network from Reddit where nodes represent posts and edges denote threading relationships (e.g., comments and replies). 
    Textual features are encoded from titles and body content, while visual features are extracted from embedded images.
    This dataset is primarily used for node classification and node clustering.


\vspace{+0.1cm}
\noindent \textbf{Ele-fashion}~\cite{ni2019justifying, hou2024bridging} 
    is a heterogeneous graph merging Amazon’s Electronics and Fashion categories. Nodes are connected via cross-category co-purchasing links, revealing latent consumer preferences across disparate domains.
    Features combine technical specs with style descriptions and product imagery.

\vspace{+0.1cm}
\noindent \textbf{Flickr30k}~\cite{plummer2015_Flickr30k} 
    is a canonical image-text reasoning dataset. 
    In the OpenMAG setting, we construct a graph where nodes represent image regions and caption phrases, linked by semantic grounding annotations. 
    This dataset is utilized for Graph-to-Text (G2Text) tasks, evaluating the model's ability to generate descriptive captions by traversing grounded visual-textual relationships.

\vspace{+0.1cm}
\noindent \textbf{SemArt}~\cite{garcia2018_SemArt} 
    is a fine-art dataset where nodes represent paintings and edges are established based on shared metadata such as artist, period, or school. 
    Node features include expert historical commentary and stylistic visual attributes from digital images. 
    SemArt serves as a benchmark for Graph-to-Image (G2Image) tasks, requiring the reconciliation of abstract historical descriptions with complex visual aesthetics.


\vspace{+0.1cm}
\noindent \textbf{MVSA}~\cite{MVSA} is a multi-view sentiment analysis dataset where each sample consists of a paired image and its corresponding tweet text collected from Twitter. The dataset is designed to facilitate research on multimodal sentiment classification, where the goal is to predict users’ attitudes (e.g., positive, negative, or neutral) by jointly modeling visual and textual information. Notably, since the sentiment labels corresponding to images and text may differ, we only retain samples where both the image and text are simultaneously negative or simultaneously positive, thus constituting a binary classification task. Furthermore, we calculate the similarity between samples based on the average of the image and text modality embeddings, constructing a KNN graph ($K=5$).

\section{Baseline Details}
\label{appendix: baseline details}

\textbf{GCN}~\cite{gcn} employs a first-order approximation of spectral convolutions combined with a renormalization technique.
It performs message passing in a layer-wise manner, producing node embeddings that jointly reflect the local graph topology and node features, with computation cost scaling linearly with the number of edges.

\vspace{+0.1cm}
\noindent \textbf{GCNII}~\cite{gcnii} enhances standard GCNs by introducing initial residual connections and identity mapping.
    These additions implicitly simulate lazy random walks, mitigating over-smoothing and enabling deeper network architectures without sacrificing representation quality.

\vspace{+0.1cm}
\noindent \textbf{GAT}~\cite{gat} leverages self-attention on graph nodes to compute learnable weights for each neighbor during aggregation.
    This mechanism allows the network to differentiate the importance of neighboring nodes, thereby capturing diverse local structures without relying on a fixed Laplacian matrix.

\vspace{+0.1cm}
\noindent \textbf{GATv2}~\cite{gatv2} extends GAT by introducing a dynamic attention function that rearranges operations in the attention computation.
    This adjustment enhances the model’s expressive power, enabling it to better approximate complex functions and remain robust against noisy or irrelevant edges.
   
\vspace{+0.1cm}
\noindent \textbf{MMGCN}~\cite{mmgcn} introduces a multimodal framework for micro-video recommendation by modeling user preferences across visual, acoustic, and textual channels.
    It builds separate modality-specific bipartite graphs, captures high-order interactions within each, and fuses them via a structured integration layer, effectively reflecting user-item dynamics in each sensory modality.
    
\vspace{+0.1cm}
\noindent \textbf{MGAT}~\cite{mgat} applies gated attention over parallel multimodal interaction graphs for personalized recommendation.
    By adaptively weighting different modalities, it disentangles fine-grained user interests and filters out noisy or conflicting signals, enhancing preference modeling robustness.
    
\vspace{+0.1cm}
\noindent \textbf{MLaGA}~\cite{mlaga} enables LLMs to reason over MAGs through a two-stage alignment process.
    It first aligns visual and textual embeddings with graph structures using contrastive pre-training, and then performs instruction tuning to inject graph connectivity priors into the model’s generative reasoning.

\vspace{+0.1cm}
\noindent \textbf{LGMRec}~\cite{lgmrec} is a multimodal recommender that models both local and global user interests through graph learning. It separates collaborative and multimodal signals in user embeddings and addresses sparsity in local interest modeling. A local graph embedding module learns collaborative and modality-specific embeddings independently, while a global hypergraph module captures overall dependencies among users and items. Combining these decoupled local and global embeddings improves recommendation accuracy and robustness.

\vspace{+0.1cm}
\noindent \textbf{GraphGPT-O}~\cite{graphgpt_o} is a multimodal LLM for joint understanding and generation over MAGs.
    It addresses non-Euclidean dependencies and scalability by combining personalized PageRank sampling with a hierarchical alignment mechanism that links node- and graph-level representations to the LLM semantic space.
    
\vspace{+0.1cm}
\noindent \textbf{Graph4MM}~\cite{graph4mm} integrates multi-hop structural cues into attention via a hop-diffused strategy.
    A dedicated MM-QFormer performs cross-modal fusion, demonstrating that incorporating graph topology as an active interaction modality outperforms approaches treating it only as auxiliary input.
    
\vspace{+0.1cm}
\noindent \textbf{InstructG2I}~\cite{instructg2i} presents a graph-conditioned diffusion model for MAGs.
    It constructs contextual prompts through semantic neighbor sampling and encodes them with a Graph-QFormer, offering controllable generation via a graph-aware classifier-free guidance mechanism.
    
\vspace{+0.1cm}
\noindent \textbf{DMGC}~\cite{dmgc} models hybrid neighborhood patterns by separating cross-modality homophily-enhanced components from modality-specific heterophily-aware ones.
    Its dual-frequency fusion mechanism acts as coupled low- and high-pass filters, capturing both intra-class smoothness and inter-class distinctions simultaneously.
    
\vspace{+0.1cm}
\noindent \textbf{DGF}~\cite{dgf} proposes cross-contrastive clustering with dual graph filtering to denoise MAG features.
    A tri-cross contrastive objective across modalities, neighborhood structures, and semantic communities enables learning of discriminative and robust clustering representations.
    
\vspace{+0.1cm}
\noindent \textbf{MIG-GT}~\cite{mig_gt} uses modality-independent GNNs with adaptive receptive fields to handle diverse propagation needs and noise levels.
    It complements local aggregation with a sampling-based global transformer to capture long-range semantic dependencies that conventional message passing may miss.
    
\vspace{+0.1cm}
\noindent \textbf{NTSFormer}~\cite{ntsformer} introduces a self-teaching graph transformer for cold-start node classification.
    A stochastic attention mask supervises student predictions (self-information only) with teacher predictions (neighbor-aware), ensuring stable performance when structural links or attributes are partially missing.
    
\vspace{+0.1cm}
\noindent \textbf{UniGraph2}~\cite{unigraph2} is a cross-domain graph foundation model integrating multiple modalities into a shared embedding space.
    It combines frozen pre-trained encoders with a mixture-of-experts module for alignment, followed by a universal GNN for structure-aware aggregation, producing transferable representations for diverse downstream tasks.
\section{Limitations and Broader Impact.}
\label{appendix: lim}

\textbf{Limitations.} Our framework primarily focuses on modality-topology co-evolution grounded in graph signal processing theory. Integration of TMTE with existing advanced MGL methods is not explored and may require additional adaptation. Moreover, while the anchor-based mechanism of TMTE improves scalability, efficiency on ultra-large multimodal relational datasets remains to be validated, benefiting from distributed learning (e.g., federated graph learning~\cite{fedbook, fedgm}) in future work.

\vspace{+0.1cm}
\noindent \textbf{Broader Impacts.} TMTE offers a generalizable paradigm for learning task-specific graph structures in multimodal settings. By reducing reliance on manually curated or task-agnostic topologies, TMTE may enable more robust and adaptive multimodal graph learning in practical applications, potentially promoting fairer and more efficient MAG-based AI systems in domains such as healthcare, education, and social services.

\end{document}